\definecolor{darkred}{rgb}{0.9,0.1,0.1}
\newtheorem{theorem}{Theorem}[section]
\newtheorem{lemma}[theorem]{Lemma}
\newtheorem{proposition}[theorem]{Proposition}
\theoremstyle{definition}
\newtheorem{remark}[theorem]{Remark}
\newcommand{\R}{\mathbb{R}}
\journal{arXiv}
\begin{document}

\begin{frontmatter}

\title{A Variational Framework for Residual-Based Adaptivity in Neural PDE Solvers and Operator Learning}

\author[inst1,label1]{Juan Diego Toscano}

\author[inst1,label1]{Daniel T.~Chen}
\author[inst2]{Vivek Oommen}
\author[inst1]{\\Jérôme Darbon}
\author[inst1,label2]{George Em Karniadakis}
\affiliation[inst1]{organization={Division of Applied Mathematics, Brown University},
   city={Providence},
   postcode={02912}, 
   state={RI},
   country={USA}}
\affiliation[inst2]{organization={School of Engineering, Brown University},
   city={Providence},
   postcode={02912}, 
   state={RI},
   country={USA}}

\fntext[label1]{These authors contributed equally to this work}
\fntext[label2]{Corresponding author: george\_karniadakis@brown.edu}
\begin{abstract}
Residual-based adaptive strategies are widely used in scientific machine learning but remain largely heuristic. We introduce a unifying variational framework that formalizes these methods by integrating convex transformations of the residual. Different transformations correspond to distinct objective functionals: exponential weights target the minimization of uniform error, while linear weights recover the minimization of quadratic error. Within this perspective, adaptive weighting is equivalent to selecting sampling distributions that optimize the primal objective, thereby linking discretization choices directly to error metrics. This principled approach yields three benefits: (1) it enables systematic design of adaptive schemes across norms, (2) reduces discretization error through variance reduction of the loss estimator, and (3) enhances learning dynamics by improving the gradient signal-to-noise ratio. Extending the framework to operator learning, we demonstrate substantial performance gains across optimizers and architectures. Our results provide a theoretical justification of residual-based adaptivity and establish a foundation for principled discretization and training strategies.

\end{abstract}


\begin{keyword}
physics-informed learning, PINNs, neural operators, adaptive sampling, PDEs
\end{keyword}

\end{frontmatter}



\section{Introduction}
\label{Introduction}

Scientific machine learning (SciML) has emerged as a powerful alternative to traditional numerical methods for solving partial differential equations (PDEs). Here, we consider two of the main approaches in SciML. The first, which includes Physics-Informed Neural Networks (PINNs)~\citep{raissi2019deep} and their variants~\citep{shukla2020physics}, focuses on function approximation, where a representation model is trained to satisfy the governing equations of a specific problem~\cite{toscano2025pinns}. The second is operator learning~\citep{lu2019deeponet,li2020fourier}, where a model learns the underlying solution operator itself, allowing it to generate solutions for new boundary conditions, source terms, or parameters almost instantaneously.

At their core, SciML models employ parameterized functions with strong approximation capabilities~\citep{liu2024kan,liu2024kan2,wang2024expressiveness,toscano2025kkans,stenkin2024mathematical,uddin2023wavelets,wu2023gpt,song2025explicit} to represent the solution of a PDE. The problem is thus transformed into an optimization task to find the optimal parameters for this representation.

For physics-informed methods, this typically involves minimizing a loss function composed of the PDE residuals and the mismatch with observational data.
This optimization-centric approach provides significant flexibility over traditional methods. For instance, PINN-style methods can easily incorporate sparse data and are not constrained by prescribed boundary conditions, which makes them highly effective for solving inverse problems. Furthermore, they can be scaled to high-dimensional problems, offering a way to mitigate the so-called curse of dimensionality~\citep{hu2024tackling}.
However, the optimization problems inherent to SciML are generally high-dimensional and non-convex, making models susceptible to converging to poor local minima. Consequently, tackling the optimization has drawn significant research attention, with efforts including the development of specialized optimizers~\citep{jnini2024gauss,urban2024unveiling,kiyani2025optimizing} and methods that simplify the optimization task by explicitly encoding physical constraints, such as the exact imposition of boundary conditions~\citep{zeinhofer2024unified,sukumar2022exact}.

Among the various approaches, one of the most prominent strategies, which addresses both optimization and {discretization} errors, is to modify the loss function itself. This is typically achieved through adaptive sampling~\citep{lu2021deepxde,wu2023comprehensive} and weighting schemes~\cite{mcclenny2023self,anagnostopoulos2024residual,chen2025self,basir2023adaptive,basir2022physics, hu2025conditionally}. Instead of altering the model or the PDE, these techniques dynamically adjust the training process to focus on regions of the domain that are more difficult to learn. Adaptive sampling methods achieve this by concentrating collocation points in areas where the PDE residual is high~\citep{wu2023comprehensive, daw2022rethinking, gao2023active}, while adaptive weighting methods assign larger local weights to these same important regions. The strategies for determining these weights are diverse, ranging from direct residual-based schemes to more complex adversarial or augmented Lagrangian formulations~\citep{anagnostopoulos2024residual,mcclenny2023self,basir2022physics, son2023enhanced}.

Due to their simplicity and efficiency, methods based on the residuals are particularly popular, as they do not require specialized architectures or additional parameters. Two such examples are residual-based attention (RBA), which applies adaptive weights, and the residual-based adaptive distribution (RAD), which modifies the sampling of collocation points~\citep{toscano2024inferring_AIV,ramireza2024residual,wang2024aspinn,ramirez2025residual,wang2024general,chen2024self,rigas2024adaptive,wu2025fmenets,si2025convolution,toscano2025mr}. Although these methods intuitively aim for the same goal of directing the optimizer's attention to high-error regions, no direct link between them has been established. Furthermore, while focusing on high-error regions seems beneficial, a formal argument for its efficiency has been lacking. These heuristic strategies are conceptually related to importance sampling; however, there is a critical difference. Standard importance sampling reweights the sample to produce an unbiased estimator with less variance, whereas the schemes used in SciML estimate the desired functionals under an adaptively biased distribution. To the best of our knowledge, no theoretical understanding of this adaptive biasing in SciML exists.

In this work, we address this gap by proposing a general framework for deriving these sampling and weighting schemes. The formulation is most clearly seen by changing the objective function from the usual mean-squared error ($L^2$) to the maximum error ($L^\infty$) over the spatial domain. Leveraging variational formulas, we show that minimizing the $L^\infty$-norm can be written in a dual form that naturally involves sampling adaptively from distributions exponentially tilted by the current residual. These new distributions can be realized through either direct sampling or importance weights, thereby providing a formal justification and unification for these training schemes. More general loss functions, such as the $L^2$ norm, can be recovered from this dual formulation using variational representations of more general statistical divergences. We refer to the multipliers obtained from this variational approach as variational residual-based attention (vRBA).

Our unified framework provides a principled origin for heuristics like RBA and RAD, showing how different potential functions correspond to different adaptive schemes. We extend these methods to operator learning with a hybrid strategy employing importance sampling over the function space and importance weighting over the spatial domain, which can be seamlessly integrated into architectures like FNO and DeepONet. The framework yields a twofold benefit: it lowers the discretization error by reducing the variance of the loss estimator, and it improves learning dynamics by enhancing the signal-to-noise ratio of the gradients, leading to faster convergence.%
Finally, we demonstrate the efficacy of vRBA across a range of challenging PINN and operator learning tasks. Our empirical results show that using vRBA is critical to achieve lower errors, providing significant improvements even when paired with state-of-the-art second-order optimizers~\citep{urban2024unveiling} or specialized architectures like TC-UNet~\citep{ovadia2025real}.


\section{Problem Setup}
\label{problen_steup}
For a domain $\Omega \subset \R^d$, define the residual $r: \Omega \to \R_+$ to be a bounded function describing the error at each spatial point $x \in \Omega$.
For example, in supervised learning tasks such as function approximation, one seeks to approximate some function $\bar u: \Omega \to \R$ with a parameterized $u(x;\theta)$, for parameters $\theta$ in some parameter space $\mathcal T$.
Then, the residual is defined as follows 
\begin{align}
\label{F_fit_loss}
    r(x) \coloneqq | \bar u(x)-u(x; \theta) |.
\end{align}
On the other hand, PINNs, the residual is given by an appropriate differential operator $\mathcal F$ applied onto the parameterized function which reads
\begin{align}
\label{PDE_Loss}
    r(x) \coloneqq | \mathcal F[u(\cdot; \theta)] |.
\end{align}
In either case, one would like to find the parameter $\theta^*$ that minimizes the residual, to approximate the unknown function $\hat u$ or to solve the differential equation $\mathcal F[u] = 0$, by defining a loss function $\mathcal L$ in terms of the residuals and solving for 
\begin{align*}
    \theta^* = \arg \min_{\theta \in \mathcal T} \mathcal L(\theta, \Omega).
\end{align*}
There are many possible choices of loss functions.
In the ideal scenario, one would like to find the parameter that minimizes the residual \textit{uniformly} over all points in the spatial domain, i.e., we seek the solution to the following optimization problem
\begin{align}
    \min_{\theta} \left\{ \max_{x\in \Omega } r(x,\theta) \right\}.
    \label{eq:p1}
    \tag{P1}
\end{align}
In other words, we wish to minimize the $L^\infty(\Omega)$-norm of the residual.
Alternatively, we can (superfluously) write \eqref{eq:p1} as an optimization over the space of probability measures
\begin{align}
    \min_{\theta}\left\{ \max_{q\in \mathcal P(\Omega) } \int_\Omega r(x,\theta) q(dx) \right\} 
    \label{eq:p1-1}
\end{align}
where the optimizer of the inner maximum is $q^* = \delta_{x^*}$ and $x^* = \arg \max_\Omega r$.

\eqref{eq:p1} is rarely used in practice due to its instability and non-differentiability.
Moreover, the optimization landscape induced by the parameterization is extremely non-convex and identically-zero residual is impossible for most parameterization schemes, e.g., neural networks, despite the availability of asymptotic approximation theorems \cite{Barron1993universal, Park1991universal, Hornik1991approximation}.
Consequently, the popular alternative is to solve a ``weaker" problem by solving
\begin{align}
    \min_\theta \left\{ \int_{\Omega} r(x,\theta)^2 p(x)dx \right\} \quad\text{where}~~p(x) = \frac{\mathbf{1}_{x\in \Omega}}{|\Omega|} 
    \label{eq:p2}
    \tag{P2}
\end{align}
and $|\Omega|$ refers to the volume of the domain.
This corresponds to the $L^2(\Omega)$ norm of the residuals and \eqref{eq:p1} is stronger than \eqref{eq:p2} in the sense that any near-optimizers of \eqref{eq:p1} are also near-optimizers in \eqref{eq:p2}.

\begin{remark}
In the context of PINNs, one's goal is to find the solution to certain partial differential equation.
Then, solutions where \eqref{eq:p1} are zero correspond to classical solutions of differential equations, i.e., $k$-times continuously-differentiable functions $u \in \mathcal C^k(\Omega)$ for some $k \geq 1$ corresponding to the highest-order derivatives in the respective differential operator.
Under this interpretation, one can potentially argue that \eqref{eq:p1} is a more natural optimization problem to consider than \eqref{eq:p2}.
\end{remark}
\noindent
Under \eqref{eq:p2}, we define the loss function
\begin{align}
\label{true_obj}
    \mathcal{L}(\theta) = \int_{\Omega} r(x,\theta)^2 p(x)dx,
\end{align}
which can be easily approximated via Monte Carlo integration, that is, let $(X_i)_{i=1}^N$ be independent samples from $p$, and define the discrete loss
\begin{align}
\label{discrete_loss2}
    \mathcal{L}^{(D)}(\theta)=\frac{1}{N}\sum_{i=1}^N r^2(X_i,\theta).
\end{align}
The discrete domain $D = (X_i)_{i=1}^N$ is also called the quadrature or the collocation points.
Minimizing this discrete loss is equivalent to minimizing the empirical mean-squared error (MSE) of a set of points randomly selected from the domain of interest. 

\subsection{Reformulating loss function for adaptive training}

The discrete loss formulation (e.g., Equation~\eqref{discrete_loss2}) has been observed to cause difficulties for certain problems. 
For instance, in PINNs for solving complex PDEs  (e.g., Allen-Cahn or Burgers' equations), it can lead to slow convergence or convergence to an incorrect solution \cite{raissi2019deep}. 
In response, two primary families of approaches have been proposed to modify the loss computation at each step of the training process.

One family of methods involves sampling the quadrature points according to a suitable residual-based distribution~\cite{lu2021deepxde, wu2023comprehensive, daw2022rethinking, gao2023active, tang2021deep, peng2022rang, zeng2022adaptive, hanna2022residual, subramanian2023adaptive, nabian2021efficient, zapf2022investigating, toscano2025aivt, daw2022mitigating}. 
A widely adopted example is the residual-based adaptive distribution (RAD)~\cite{wu2023comprehensive}, where instead of sampling the quadrature points from the base distribution $p$, one would sample from an adaptive, tilted distribution computed as follows
\begin{align}
\label{q_RAD}
    q_{\text{RAD}}^k(x) \propto \frac{r(x)^\nu}{\mathbb{E}_p[r(x)^\nu]} + c
\end{align}
where $\nu$ and $c$ are hyperparameters and $k$ is the index of the current iteration.

A second line of work, with seminal contributions from~\cite{mcclenny2023self}, proposes modifying Equation~\ref{discrete_loss2} by assigning local multipliers $\lambda_i$ to each collocation point $X_i$ and computing a weighted sum instead. 
More explicitly, one can compute an alternative loss function of the form
\begin{align*}
    \mathcal{L}_W^{(D)} (\theta)=\frac{1}{N}\sum_{i=1}^N[\lambda_i r(X_i,\theta)]^2,
\end{align*}
while numerous proposals have been made for the specific form of $\lambda_i$~\cite{mcclenny2023self,zhang2023dasa,basir2022physics,basir2022investigating,basir2023adaptive,anagnostopoulos2024residual,song2024loss,shukla2024comprehensive,ramirez2024residual,chen2024self}.
One of the most effective methods define the multipliers explicitly from the residual, as in the residual-based-attention (RBA) method: for the spatial point $X_i$, we set
\begin{align}
\label{q_RBA}
    \lambda_i^k \propto \frac{r(X_i)}{\sum_j r(X_i)}.
\end{align}
In practice, for stability, the local multipliers computation often involves an exponential moving average of the form ${\lambda}_i^k = \gamma {\lambda}^{k-1}_i + \frac{\eta}{\max_{\Omega} q} q_i^k$. 

Both approaches, therefore, follow a general theme: they modify the objective in Equation~\ref{discrete_loss2} to bias the optimization towards high-error regions. These methods have been widely adopted and have proven quite successful; however, their theoretical foundation remains elusive.

\subsection{Learning Dynamics}
\label{sec:learning-dynamics}

Once the objective function ($\mathcal L$) is calculated, the model parameters are generally optimized using a line search algorithm of the form
\begin{align}
    \label{eq:line-search}
    \theta^{k+1} &= \theta^{k}+\alpha^{k} p^{k}
\end{align}
where $\alpha^{k}$ is the step size, and $p^{k}=-H_k\nabla_{\theta}\mathcal{L}(\theta^{k})$ is the update direction which depends on the gradient of the objective function and some symmetric matrix $H_k$ \cite{urban2024unveiling}. For first-order optimizers such as Adam~\cite{kingma2014adam}, $H_k$ is treated as the identity matrix. Therefore, for the discrete case, the update direction induced by Equation~\eqref{discrete_loss2} is given by
\begin{align}
    \label{update_discrete}
    \hat p_k \coloneqq - \nabla_\theta L^{(D)}(\theta) &= - \frac{1}{N} \sum_{i=1}^N \nabla_\theta r(X_i, \theta^k)^2. 
\end{align}
Notice that a successful optimization would be achieved if the discrete update direction contains enough information to minimize the loss in the whole domain. However, even if the sampling is performed only once at the beginning of training and the full data is used to perform the update stage (commonly referred to as full-batch training), the optimization process can lead to imbalances.

\begin{lemma}
For a discrete domain $D$ of size $N = Mb$, define $\{B_j\}_{j=1}^b$ to be an equal-sized partition of $D$, that is 
\begin{align*}
    \bigcup_{j=1}^b B_j = D, ~~ B_j \cap B_{j'} = \emptyset ~~\text{for}~~j \neq j', ~~ \text{and}~~|B_j| = M~~\text{for all}~~ j = 1, \dots, b.
\end{align*}
Letting $\hat p_j^k = -\nabla_\theta \mathcal{L}^{(B_j)}(\theta^k)$ be the update direction using the batch $B_j$, the full-batch update direction is the average of the update directions of its partitions
\begin{align*}
    \hat p^k = \frac{1}{b} \sum_{j=1}^b \hat p_j^k
\end{align*}
\end{lemma}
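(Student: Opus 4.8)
The plan is to reduce the claim to linearity of the gradient, after first establishing the analogous decomposition at the level of the loss itself. First I would write out the full-batch discrete loss via Equation~\eqref{discrete_loss2} with $N = Mb$, namely $\mathcal{L}^{(D)}(\theta) = \frac{1}{Mb}\sum_{i=1}^{N} r^2(X_i,\theta)$. Since $\{B_j\}_{j=1}^b$ is a partition of $D$ — disjoint and exhaustive with $|B_j| = M$ — the single sum over $i = 1,\dots,N$ regroups without double-counting or omission as a double sum $\sum_{j=1}^b \sum_{X \in B_j}$. Factoring out $\tfrac{1}{b}$ and recognizing that $\tfrac{1}{M}\sum_{X\in B_j} r^2(X,\theta)$ is precisely $\mathcal{L}^{(B_j)}(\theta)$ (the discrete loss restricted to the batch $B_j$, a set of cardinality $M$), one obtains $\mathcal{L}^{(D)}(\theta) = \tfrac{1}{b}\sum_{j=1}^b \mathcal{L}^{(B_j)}(\theta)$.

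The second and final step is to differentiate this identity in $\theta$. Because the sum is finite and each $\mathcal{L}^{(B_j)}$ is differentiable in $\theta$, the gradient commutes with the sum and with the scalar multiple, giving $\nabla_\theta \mathcal{L}^{(D)}(\theta) = \tfrac{1}{b}\sum_{j=1}^b \nabla_\theta \mathcal{L}^{(B_j)}(\theta)$. Negating both sides and inserting the definitions $\hat p^k = -\nabla_\theta\mathcal{L}^{(D)}(\theta^k)$ and $\hat p_j^k = -\nabla_\theta\mathcal{L}^{(B_j)}(\theta^k)$ yields the claimed $\hat p^k = \tfrac{1}{b}\sum_{j=1}^b \hat p_j^k$.

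There is no substantive obstacle; the only care needed is bookkeeping — that the normalizations combine as $\tfrac{1}{N} = \tfrac{1}{b}\cdot\tfrac{1}{M}$, which is exactly why the equal-partition hypothesis $|B_j| = M$ is imposed so that the average over batches is unweighted, and that differentiation under a finite sum needs nothing beyond differentiability of each summand. I would also note in passing that the identity is purely algebraic: it holds verbatim for any per-point loss, not only the squared residual, and the reason for isolating it is conceptual — it is the clean decomposition that the subsequent discussion builds on to argue that imbalance of the residual across the $B_j$ (hence across mini-batch gradients $\hat p_j^k$) is what degrades the gradient signal-to-noise ratio.
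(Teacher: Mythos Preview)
Your proposal is correct and matches the paper's proof essentially line for line: both decompose the full-batch loss as $\mathcal{L}^{(D)} = \tfrac{1}{b}\sum_{j=1}^b \mathcal{L}^{(B_j)}$ via the partition and the normalization identity $\tfrac{1}{N} = \tfrac{1}{b}\cdot\tfrac{1}{M}$, then invoke linearity of the gradient and negate. Your additional remarks on why the equal-size hypothesis is needed and on the identity's generality go slightly beyond what the paper writes, but the argument itself is the same.
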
 

\begin{proof}
The proof follows from the linearity of the gradient operator. The loss on the full discrete domain, Equation~\eqref{discrete_loss2}, can be expressed as the average of the losses on the partitions $\{B_j\}$. 
\begin{align*}
 \mathcal{L}^{(D)}(\theta) &= \frac{1}{N} \sum_{i=1}^N r^2(X_i, \theta^k) = \frac{1}{bM} \sum_{j=1}^b \sum_{x \in B_j} r^2(x, \theta) \\
 &= \frac{1}{b} \sum_{j=1}^b \left( \frac{1}{M} \sum_{x \in B_j} r^2(x, \theta) \right) \\
 &= \frac{1}{b}\sum_{j=1}^b \mathcal{L}^{(B_j)}(\theta)
\end{align*}
Applying the negative gradient operator $-\nabla_\theta$ to both sides yields the result.
\end{proof}

The arbitrary partitioning of the dataset highlights a critical challenge. Even when considering the full dataset (i.e., a single computation or full batch), the gradients implicitly computed from different data partitions may point in conflicting directions. If these component gradients are not in alignment, their average, which constitutes the full-batch gradient, can be diminished or misleading, leading to slow convergence or stagnation.
\subsection{Signal-to-Noise Ratio (SNR)}

As shown in the preceding section, gradient disagreement is not merely an artifact of stochastic sampling; it is an inherent property of the loss landscape, observable even within a single, complete batch of data. To analyze these variations, several studies~\cite{schaul2013no,anagnostopoulos2024learning,tishby2000information} have considered the signal-to-noise ratio (SNR). A common formulation for the SNR of a gradient vector reads
\begin{align}
    \label{SNR}
    \text{SNR} = \frac{\|\mathbb{E}_{B_j}[ \hat p_j^{k}]\|_2}{\sqrt{\text{Tr}(\text{Var}_{B_j}[ \hat p_j^{k}])}}
\end{align}
where the expectation $\mathbb{E}_{B_j}$ and variance $\text{Var}_{B_j}$ are taken over the random selection of a partition $B_j$ from the full dataset.

To formalize this, we can decompose the update direction $\hat p_j^{k}$ from a partition $B_j$ relative to the true update direction from the continuous loss in Equation~\eqref{true_obj}, which we define as our ``signal": $\hat{p}^{k}=-\nabla_\theta \mathcal{L}^{(D)}(\theta^k)$. The partition-based direction is 
\begin{align*}
    \hat p_j^{k} = \hat p^{k} + \epsilon_i^{k}, 
\end{align*}
where $\epsilon_i^{k}$ is the deviation induced by the discretization and the specific choice of partition $B_j$. 
The discrete partition gradients are unbiased in the sense that $\mathbb{E}_{B_j}[\hat p_j^{k}] = \hat{p}^{k}$, it follows that the noise term has zero mean:
\begin{align*}
    \mathbb{E}_{B_j}[\varepsilon_i^{k}] = \mathbb{E}_{B_j}[\hat p_j^{k} - \hat{p}^{k}] = \mathbb{E}_{B_j}[\hat p_j^{k}] - \hat{p}^{k} = 0.
\end{align*}
On the other hand,  $\hat p_j^k$ has covariance
\begin{align*}
    \text{Var}_{B_j}[\hat p_j^{k}] &= \mathbb{E}_{B_j}\left[ (\hat p_j^{k} - \mathbb{E}_{B_j}[\hat p_j^{k}]) ( \hat p_j^{k} - \mathbb{E}_{B_j}[\hat p_j^{k}])^T \right] \\
    &= \mathbb{E}_{B_j}\left[ (\hat{p}^{k} + \varepsilon_i^{k} - \hat{p}^{k}) (\hat{p}^{k} + \varepsilon_i^{k} - \hat{p}^{k})^T \right] \\
    &= \mathbb{E}_{B_j}[\varepsilon_i^{k}(\varepsilon_i^{k})^T].
\end{align*}
The denominator in the SNR formula corresponds to the noise power, which is the trace of this covariance matrix
\begin{align*}
    \text{Tr}(\text{Var}_{B_j}[\hat p_j^{k}]) = \mathbb{E}_{B_j}[\text{Tr}(\varepsilon_i^{k}(\varepsilon_i^{k})^T)] = \mathbb{E}_{B_j}[\|\varepsilon_i^{k}\|_2^2].
\end{align*}

Therefore, the SNR is given by the ratio of the magnitude of the signal (the true update direction) to the root mean square (RMS) magnitude of the noise (the deviations due to partitioning).
This phenomenon of quantifying the disagreement, or ``noise",  between gradients has been analyzed extensively in the context of stochastic and minibatch training~\cite{schaul2013no}. Similar metrics have also been studied for different problem settings, such as multiobjective loss functions~\cite{wang2025gradient}.

\subsection{Stages of Learning}

The SNR provides insight into the deterministic and stochastic regimes of the training process, where high SNR corresponds to confident, deterministic updates, and low SNR indicates a more exploratory, stochastic phase~\cite{shwartz2017opening}. By tracking the evolution of the SNR, recent studies have identified three distinct stages of learning: fitting, transition, and diffusion. This phased progression has been observed across a variety of domains, including function approximation~\cite{shwartz2017opening,toscano2025kkans}, PINNs~\cite{anagnostopoulos2024learning,shukla2024comprehensive}, and operator learning~\cite{toscano2025kkans}, for diverse representation models such as MLPs and KANs. The three stages can be described as follows:

\paragraph{Fitting} This is the initial phase of training. The process begins with large gradients and high agreement between subdomain updates, yielding a high SNR. During this deterministic phase, the model rapidly reduces the training error by learning the dominant trends in the data. As these general features are captured, however, disagreements between subdomains on finer details emerge, causing the SNR to decrease. This stage is thus characterized by a sharp initial reduction in training error with minimal improvement in generalization error, accompanied by an increase in the model's geometric complexity~\cite{shwartz2017opening}.

\paragraph{Transition} Following the initial fit, the model enters an exploratory stage characterized by a sustained low SNR. Here, the model attempts to reconcile conflicting objectives from different subdomains, but there is no consensus on an optimal update direction. Consequently, this phase exhibits little to no improvement in the generalization error, while the geometric complexity may continue to increase as the model explores the solution space~\cite{anagnostopoulos2024learning,shukla2024comprehensive}.

\paragraph{Diffusion} After a period of exploration, the model may become sufficiently complex to find a parameter configuration that realigns the subdomain gradients. This leads to a sudden increase in the SNR, marking the beginning of the productive diffusion phase. Once this consensus direction is found, the generalization error improves significantly. Concurrently, the geometric complexity often decreases as the model identifies an efficient internal representation. Eventually, as the loss converges and the gradient magnitudes (the signal) diminish, the SNR naturally decays again, at which point learning slows down and the generalization error plateaus.

One of the main advantages of this approach is that it provides a measure of convergence that can be used to evaluate a model's performance. Previous studies~\citep{anagnostopoulos2024learning, shukla2024comprehensive} have shown that the best-performing models typically reach the total diffusion phase earlier than others;  moreover, in general, they maintain a consistently higher SNR. Conversely, models that fail to converge, often remain trapped in the diffusion stage, unable to progress further \cite{toscano2025pinns}.

\section{A framework for approximate uniform minimization}
\label{vRBA}
In this work, we propose a dual formulation of \eqref{eq:p1} inspired by the following problem
\begin{align*}
    \min_{\theta \in \mathcal T} \max_{q \in \mathcal P(\Omega)} \left\{ \int_\Omega r(x;\theta) q(dx) + \epsilon \mathbf H(q | p) \right\} \quad \text{for}~ 0 < \epsilon \ll 1
\end{align*}
where $\mathbf H$ denotes the relative entropy or the Kullback-Leibler divergence (cf.~\eqref{eq:def-rel-ent}).
See \eqref{eq:p1dual} for the exact dual problem.
Note the similarity between the above variational problem and~\eqref{eq:p1-1}: the two problems are equivalent when the regularizer $\epsilon$ is zero.
However, for non-zero~$\epsilon$, we enforce absolute continuity of the \textit{test measure} $q$ with the \textit{base measure} $p$ and exclude singular choices of $q$, such as the optimizer of \eqref{eq:p1-1}.
The dual formulation enables stability of training throughout while adaptively improving parts of the domains with higher residuals.
\subsection{A dual reformulation}
\label{sec:dual-formulation}

To arrive at the dual form of \eqref{eq:p1}, we use the Laplace principle (\ref{prop:laplace-principle}), which writes the maximum as a limit of integrating an increasing singular exponential functional
\begin{align*}
    \eqref{eq:p1} = \min_{\theta \in \mathcal T} \sup_{\epsilon > 0} \left\{ \epsilon \log \int_\Omega e^{r(x; \theta)/\epsilon} p(dx) \right\} \geq \sup_{\epsilon > 0} \min_{\theta \in \mathcal T} \left\{ \epsilon \log \int_\Omega e^{r(x; \theta)/\epsilon} p(dx) \right\}.
\end{align*}
To arrive at the desired form, we further express the inner functional using the Gibbs variational formula (\ref{prop:gibbs-variational})
\begin{align}
    \max_{\epsilon > 0} \min_{\theta \in \mathcal T} \max_{q \in \mathcal P(\Omega)} \left\{ \int_\Omega r(x;\theta) q(dx) - \epsilon \mathbf H(q | p) \right\}
    \label{eq:p1dual}
    \tag{P1*}
\end{align}
where $\mathbf H$ is the relative entropy
\begin{align}
    \mathbf H(q | p) = 
    \begin{dcases}
        \int_\Omega \log \frac{dq}{dp}(x) q(dx) &\text{if the Radon-Nikodym derivative}~\frac{dq}{dp}~\text{exists}, \\
        +\infty &\text{otherwise}.
    \end{dcases}
    \label{eq:def-rel-ent}
\end{align}
The innermost optimization admits a closed-form solution (see \ref{prop:gibbs-variational}) which reads 
\begin{align*}
    q^*(dx) = \frac {e^{r(x; \theta)}} {\int_\Omega e^{r(x';\theta)} p(dx')} p(dx) = \arg \max_{q \in \mathcal P(\Omega)} \left\{ \int_\Omega r(x;\theta) q(dx) - \epsilon \mathbf H(q | p) \right\}.
\end{align*}

\begin{remark}
The Laplace principle, referred to also as Varadhan's lemma \cite[Theorem 4.3.1]{dembo2009large}, has a long history in the area of statistical physics and large deviations theory.
In combination with a theorem from Bryc and the Gibbs variational principle, the Laplace principle gives an alternative characterization for large deviations principles and is foundational for the development of the \textit{weak convergence approach} to large deviations of Dupuis and Ellis \cite{dupuis2011weak, budhiraja2019analysis}; the computation used to derive the dual form is reminiscent of the techniques used in this weak convergence approach.
Similar exponential functionals are known in the community as the softmax function, commonly used in machine learning, Bayesian statistics, and transformers as a smooth approximation of the maximum function.
A similar family of probability distributions was recently used by Alberts and Bilionis \cite{alberts2023physics} for uncertainty quantification.
The inverse temperature $\beta$ plays the role of $1/\epsilon$ in this manuscript.
\end{remark}
The formulation of \eqref{eq:p1dual} gives rise to a natural alternating optimization scheme.
At each iteration, we solve the innermost optimization to the outermost.
Fix initial conditions $q^0, \theta^0, \epsilon^0$, define the iterations for the $k+1$-th iteration given the $k$-th:
\begin{align}
    \begin{dcases}
    &q^{k+1}(dx) \gets \frac{1}{\mathsf Z^k} \exp \left( \frac{r(x;\theta^k)}{\epsilon^k} \right) p(dx) ~~\text{where}~~ \mathsf Z^k = \int_\Omega \exp \left( \frac{r(x;\theta^k)}{\epsilon^k} \right) p(dx); \\
    &\theta^{k+1} \gets \mathsf{Minimize}\int_{\Omega} r(x,\theta^k) q^{k+1}(dx); \\
    &\epsilon^{k+1} \gets \mathsf{AnnealingScheme}(k, \epsilon^k, \theta^{k+1}, q^{k+1}).
    \end{dcases}
    \label{eq:alg-sketch}
\end{align}
There are two design choices.
The oracle $\mathsf{Minimize}$ refers to equation \eqref{eq:line-search} such as gradient descent and various higher-order methods.
On the other hand, $\mathsf{AnnealingScheme}$ refers to a schedule that incrementally decreases $\epsilon$.
For every fixed $\theta$ and $q$, the objective function in \eqref{eq:p1dual} is maximized by taking $\epsilon \to 0$ (\ref{prop:laplace-principle}).
However, since the optimization in $\theta$ takes place incrementally, the outer-loop optimization ought to take place in accordance with the inner loop.
For example, an annealing scheme used in numerical experiments is of the form
\begin{align*}
    \mathsf{AnnealingScheme}(k, \epsilon, \theta, q) = \frac{c \max_\Omega u(\cdot;\theta)}{\log 2 + k},
\end{align*}
which approaches zero as $k \to \infty$.

We contrast the proposed algorithm with simulated annealing \cite{vcerny1985thermodynamical, kirkpatrick1983optimization, geman1986diffusions}---which was the source of the terminological choice and the annealing schedule \cite[Theorem 1]{geman1986diffusions}---where given a sufficiently smooth potential $V: \Theta \to \R$, one finds the solution to $\min_\Theta V$ to be the long-time behavior ($t \to \infty$) of the solution to the stochastic differential equation
\begin{align*}
    d\theta_t = - \nabla V(\theta_t) dt + \sqrt{2\epsilon_t} dB_t
\end{align*}
where $B$ is a standard Brownian motion and $\epsilon_t$ is an annealing scheme (that is, $\epsilon^k \to 0$ as $k \to \infty$).
On the other hand, a formal functional central limit theorem suggests that the parameters obey the stochastic equation
\begin{align*}
    d\theta_t = - \mathbb E_{q_t} \left[ \nabla_\theta r(\cdot; \theta)\right] dt + \sqrt{\text{Var}_{q_t} \left[ \nabla_\theta r(\cdot; \theta)\right]} dB_t
\end{align*}
where $q_t$ depends on $\epsilon_t$ through
\begin{align*}
    q_t(dx) \propto e^{r(x; \theta_t)/\epsilon_t} p(dx).
\end{align*}
First, we remark that the ``potential'' in this case varies with $\epsilon_t$.
Also, unless $r(\cdot; \theta)$ has a unique maximum for all $\theta$, the limit of $q_t$ is non-degenerate and the variance does not vanish, which marks a second difference.
For these reasons, we simply draw formal connections between vRBA and simulated annealing but do not claim rigorous equivalences.

\subsection{Generalization of dual formulation}
\label{sec:general-duality}

Recall that Laplace's principle rewrites the $L^\infty$-norm by exponentially weighting spatial domains with large residuals which dominate the integral.
One can argue that similar concentration phenomena should hold for pairings besides the canonical $\log$-$\exp$ one.
In this subsection, we pursue this generalization to obtain a variational representation for RBA of the same form.
Our strategy is an inverse one: we start with a representation formula of form \eqref{eq:p1dual} and attempt to reverse-engineer the primal minimization objective.

Consider \textit{potential function} $\Phi: \R_+ \to \R_+$ to be convex, bounded from below, and superlinear, that is, 
\begin{align*}
    \lim_{r \to \infty} \frac{\Phi(r)}{r} = + \infty.
\end{align*}
The starting point of the generalization is to replace the relative entropy with the more general $\Phi$-divergence, defined by
\begin{align*}
    \mathbf D_\Phi(q|p) \coloneqq
    \begin{dcases}
        \int_\Omega \Phi \left( \frac{dq}{dp}(x) \right) p(dx) &\text{if the Radon-Nikodym derivative}~\frac{dq}{dp}~\text{exists}, \\
        +\infty &\text{otherwise.}
    \end{dcases}
\end{align*}
For strictly convex, superlinear $\Phi$, the statistical divergence is also convex, lower semicontinuous, and zero if and only if $q = p$.

We propose the generalized dual optimization problem:
\begin{align}
    \sup_{\epsilon > 0} \min_{\theta \in \mathcal T} \sup_{q \in \mathcal P(\Omega)} \left\{ \mathbb E_q \left[r(x;\theta) \right] - \epsilon \mathbf D_{\Phi^*}(q | p) \right\}
    \label{eq:p1-phi}
\end{align}
where $\Phi^*$ is the convex conjugate (or the Legendre-Fenchel transform) of $\Phi$, i.e.,
\begin{align*}
    \Phi^*(s) = \sup_{s \in \R} \left\{ sr - \Phi(r) \right\}.
\end{align*}
Restricting $\Phi$ to convex, suplinear functions guarantees that $\Phi^*$ is the same.
Similar to relative entropy, the innermost optimization also admits a generalized Gibbs variational represetation \cite{birrell2022f} (see \ref{prop:generalized-gibbs} for a statement), albeit with a more complicated form:
\begin{align*}
    \inf_{\nu \in \R} \left\{ \nu + \mathbb E_p[\Phi(r - \nu)] \right\} = \sup_{q \in \mathcal P(\Omega)} \left\{ \mathbb E_q \left[r(x;\theta) \right] - \epsilon \mathbf D_{\Phi^*}(q | p) \right\}
\end{align*}
From here and following steps in Section \ref{sec:dual-formulation}, it follows that \eqref{eq:p1-phi} is a dual formulation for minimizing the following norm-like quantity:
\begin{align}
    \min_{\theta \in \mathcal T} \sup_{\epsilon > 0} \left\{ \Lambda_\epsilon(r) \right\} ~~\text{where}~~\Lambda_\epsilon(r) \coloneqq \inf_{\nu \in \R} \left\{ \epsilon \Phi^{-1} \left( \frac{\nu}{\epsilon} + \int_\Omega \Phi \left( \frac{r(x;\theta) - \nu}{\epsilon} \right) p(dx) \right) \right\}
    \label{eq:p1-phi-lambda}
\end{align}
where we reparametered $\nu \mapsto \nu/\epsilon$ and moved the infimum to the outside as $\Phi^{-1}$ is monotonic.

In the absence of a specific $\Phi$, the functional $\lim_{\epsilon \to 0} \Lambda_\epsilon$ is difficult to interpret.
We enumerate several special cases that can be of interest.
\begin{enumerate}
    \item $\Phi(r) = e^r$ recovers the Laplace's principle.
    \item $\Phi(r) = r^2 + 1$, the corresponding $\Phi^*$-divergence is the chi-squared divergence and the objective function in the primal problem is the standard deviation (cf.~\ref{prop:quadratic-potential}): 
    \begin{align}
        \lim_{\epsilon \to 0} \Lambda_\epsilon(r) = \sqrt{\mathbb E_p[r^2] - \mathbb E_p[r]^2}.
        \label{eq:p1-phi-quad}
    \end{align}
    Note that this primal problem is not well-posed: in the context of PINNs, the solutions learned are the ones of the form $\mathcal F[u] = c$ for any $c \in \R$.
    However, by choosing the annealing scheme $\epsilon^k$ a particular way, one could obtain a dual formulation that is well-posed and corresponds to the $L^2(\Omega)$-norm (see discussion below).
    \item Taking $\nu = 0$, when the inequality
    \begin{align*}
        p \Phi(r) \leq r \Phi'(r) \leq q \Phi(r)
    \end{align*}
    is satisfied for some $p, q \in (1, \infty)$---which includes the case where $\Phi(r) = r^p _ c$---then by \cite[Equation 3]{fiorenza1991inequality}, we have
    \begin{align*}
        \frac{1}{c} \|r\|_{L^p(\Omega)} \leq \Lambda_\epsilon(r) \leq c \|r\|_{L^q(\Omega)} ~~\text{for}~c = \left( \frac{p}{q} \right)^{1/p}.
    \end{align*}
\end{enumerate}

A crucial step component is lost in this generalization: the generalized Gibbs variational formula lacks a general form of the optimizer in the innermost optimization of \eqref{eq:p1-phi}.
In particular, we only know that $q$ is optimal when
\begin{align*}
    q(dx) = \Phi' \left( \frac{r(x;\theta)}{\epsilon} \right) p(dx),
\end{align*}
which can only hold under an additional normalization condition
\begin{align}
    \int_\Omega \Phi' \left( \frac{r(x;\theta)}{\epsilon} \right) p(dx) = 1,
    \label{eq:p1-phi-normalization}
\end{align}
and is not always achieved.
Hence, we rely on the choice of the annealing scheme to ensure \eqref{eq:p1-phi-normalization} is always satisfied.
For example, if $\Phi(r) = r^2 + 1$, $\Phi'(r) = 2r$ and $\epsilon^k$ can be chosen directly to be whatever constant normalizes the kernel
\begin{align*}
    q^k(dx) = \frac{r(x;\theta^k)}{\int_\Omega r(x;\theta^k) p(dx)} p(dx).
\end{align*}
Satisfying \eqref{eq:p1-phi-normalization} implies that the optimal $\nu$ in \eqref{eq:p1-phi-lambda} is achieved at zero (Item 2 of \ref{prop:generalized-gibbs}), which recovers the canonical choice of minimizing the $L^2(\Omega)$-norm.

\begin{remark}
The functional $\Lambda_\epsilon$ has connections to the \textit{Jensen sums}, which are studied in convex analysis and have relations with \textit{Orlicz spaces} $L^\Phi$.
For convex, superlinear potentials~$\Phi$, Orlicz spaces are Banach spaces equipped with the Luxembourg norm
\begin{align*}
    \| f \|_\Phi = \inf \left\{ \epsilon > 0: \int_\Omega \Phi(|f(x)|/\epsilon) p(dx) \leq 1\right\}.
\end{align*}
Due to the similarity of forms, we conjecture connections between the asymptotics of the functional $\lim_{\epsilon \to 0} \Lambda_\epsilon$ to Orlicz spaces.
However, to limit the scope of the paper, we defer a detailed theoretical and numerical investigation of general choices of $\Phi$---as well as implications to optimization and interpretations in the context of PINNs and related machine learning tasks---to future work.
\end{remark}

\subsection{Discretization}
\label{sec:discretization}
An outstanding issue in implementing \eqref{eq:alg-sketch} is the tractability of integrating against the measure $q^k$.
Even when $p$ is a simple measure, e.g., uniform on a domain $\Omega$ or Gaussian, the exponential tilt---particularly the normalizing constant $\mathsf Z^k$---is typically expensive to compute as it requires fine discretization of the mesh.
Generally, updating $q^k$ only after several iterations suffices to reduce the computational complexity.

However, when $p$ is a distribution easy to sample from, one can approximate the expectation via Monte Carlo and a change-in-measure.
That is, let $(X_i)_{i=1}^n$ be independent and identically-distributed (i.i.d.)~samples from $p$.
Then, we can approximate $\mathbb E_{q^k}[r]$ for any measurable functional $r: \Omega \to R$, i.e., the residual or its gradient, by
\begin{align*}
    \mathbb E_{q^k}[r] = \mathbb E_p \left[ r \frac{dq^k}{dp} \right] = \lim_{n \to \infty} \frac{1}{n}\sum_{i=1}^n \frac{dq^k}{dp}(X_i) r(X_i).
\end{align*}
This is an \textit{unbiased} estimator for the expectation of $r$ over $q^k$.
However, in this context, the Radon-Nikodym derivative has an integral (the normalizing constant) that is difficult to compute. 
Therefore, we resort to approximating the integral with Monte Carlo samples, e.g., 
\begin{align*}
    \mathbb E_{q^k}[r] = \lim_{n \to \infty} \hat r^n \coloneqq \dfrac{ \sum_{i=1}^n e^{r(X_i;\theta^k)/\epsilon^k} r(X_i)}{\sum_{j=1}^n e^{r(X_j;\theta^k)/\epsilon^k}} \quad\quad p/q^k\text{-almost-surely}.
\end{align*}
In particular, we can define the \textit{multipliers}
\begin{align}
\label{exponential_q}
    \lambda_i^k \coloneqq \frac{e^{r(X_i; \theta^k)}}{\sum_j e^{r(X_j; \theta^k)}} ~~\text{which gives}~~ \hat r^n = \sum_{i=1}^n \lambda_i^k r(X_i),
\end{align}
and hence we recover the form of the estimator popularly used in the PINNs community.
Estimating the normalizing constant introduces bias to the estimator, though the estimator is nonetheless consistent and enjoys the central limit
\begin{align*}
    \lim_{n \to \infty} \sqrt{n} \left( \hat r^n - \mathbb E_{q^k}[r] \right) \rightarrow \mathsf N\left( 0, \text{Var}_{p} \left[ r \dfrac{dq^k}{dp} \right] \right) ~~\text{in distribution},
\end{align*}
where $\mathsf N(\mu, \sigma^2)$ is a Gaussian measure with mean $\mu$ and variance $\sigma^2$.
In other words, the discretization error vanishes on the order of $\mathcal O(1/\sqrt n)$ with constants that scale with the variance.
In the context of RBA, if the following inequality holds:
\begin{align*}
    \text{Var}_{p} \left[ r \dfrac{dq^k}{dp} \right] \leq \text{Var}_{p} \left[ r \right],
\end{align*}
then the discretization error obtained from reweighting is lower than that without; this indeed is observed empirically as demonstrated in the later sections.

As a final remark particularly relevant for operator learning: if the domain is a product space $\Omega = \Omega_1 \times \Omega_2$ equipped with a product measure $p(dx_1, dx_2) = p(dx_1) p(dx_2)$, the Radon-Nikodym derivatives can be disintegrated to facilitate implementation.
More specifically, one rewrites
\begin{align}
\label{Operator_loss_continous}
    \mathbb E_{q^k}[r] = \int_\Omega \frac{dq^k}{dp}(x) r(x) p(dx) = \int_{\Omega_1} \frac{dq^k}{dp}(x_2) \int_{\Omega_2} \frac{dq^k}{dp}(x_1|x_2) r(x_1, x_2) p(dx_1) p(dx_2)
\end{align}
where
\begin{align*}
    \frac{dq^k}{dp}(x_1|x_2) = \frac{\frac{dq^k}{dp}(x_1, x_2)}{\int_{\Omega_2} \frac{dq^k}{dp}(x_1, x_2) p(dx_1)} ~~\text{and}~~ \frac{dq^k}{dp}(x_2) = \frac{\int_{\Omega_2} \frac{dq^k}{dp}(x_1, x_2) p(dx_1)}{\int_\Omega \frac{dq^k}{dp}(x) p(dx)}.
\end{align*}
From here, one can choose to sample or weight either coordinate.
For example, in operator learning, the domain is a product of some function space and the spatial domain that the functions is defined on.
We then compute the vRBA estimate by sampling over function space and computing multipliers for the spatial domain.

\section{Methods}
\label{sec:methods}

\subsection{Variational Residual-based Attention Methods}
\label{vrba_methods}

As described in the previous section, the general method is flexible, allowing for the use of different potential functions, $\Phi$, with particular properties.

The training process starts by sampling $N$ i.i.d.~random variables $\{X_i\}_{i=1}^N$ uniformly from the domain $\Omega$ and calculating the corresponding residuals $r(X_i)$. The general method then involves three steps per iteration: (1) updating the tilted distribution $q$, (2) updating the model parameters $\theta$ via a line search method, and (3) updating the temperature parameter $\epsilon$ using an annealing scheme.

\subsubsection{Update the tilted distribution}

While the generalized Gibbs variational principle does not typically yield a closed-form update rule for the distribution $q$, one can obtain the optimal tilt by choosing specific potentials $\Phi$ and annealing schedules (to be discussed in the coming subsection).
Under the appropriate choices, $q$ is proportional to the derivative of the potential $\Phi'$.
For a discrete set of points, the optimal probability mass function (p.m.f.) for the next iteration, $q^{k+1}$, is given by
\begin{align}
\label{general_q}
    q^{k+1}(X_i) \propto \Phi' \left( \frac{r(x_i;\theta^k)}{\epsilon^k} \right)
\end{align}
We consider two particular cases for $\Phi$.

\paragraph{Case I: $\Phi(x)=e^x$} This choice of potential recovers the Laplace principle, which is associated with minimizing the $L^{\infty}$ norm, and there is an exact convex duality.
The derivative is $\Phi'(x)=e^x$, so the optimal distribution is proportional to $\exp(|r|/\epsilon)$ under no additional assumptions. 
This resulting distribution is given by a variation of the \textit{softmax} of the scaled residuals akin to attention mechanisms popularized in the transformer architecture \cite{vaswani2017attention}
\begin{align}
\label{exp_dist}
    q^{k+1}(X_i,\theta^k) =\frac{\exp \left( \frac{r(X_i;\theta^k)}{\epsilon^k} \right) }{\sum_{j=1}^N \exp \left( \frac{r(X_j;\theta^k)}{\epsilon^k} \right) } 
\end{align}

\paragraph{Case II: $\Phi(x)= x^2 + 1$} In this case, the derivative is $\Phi'(x)= 2x$, which is associated with minimizing the $L^2$ norm. 
In this case, the convex duality is not exact, and the appropriate annealing schedule will be chosen shortly.
The discretized p.m.f. takes the following form
\begin{align}
\label{quad_dist}
    q^{k+1}(X_i,\theta^k) =\frac{|r(X_i,\theta^k)|}{\sum_{j=1}^N |r(X_j,\theta^k)|}
\end{align}

Finally, since the collocation points $\{X_i\}$ are randomly sampled and the target distribution $q^{k+1}$ depends on the current model parameters, the resulting p.m.f. can be prone to spurious fluctuations. To promote stability, especially when using fast-growing potentials like $\Phi(r)=e^r$ that can create sharply peaked distributions, we smooth the target distribution over time using an exponential moving average (EMA).

Furthermore, we introduce an additional smoothing mechanism by interpolating between the adaptive distribution $q^{k+1}$ and the base uniform distribution $p_u$. The combined update rule for the importance weights reads
$$
\lambda_i^{k+1} = \gamma \lambda_i^k + \eta^* \left( \phi q^{k+1}(X_i, \theta^k) + (1-\phi) p_u(X_i) \right)
$$
where $\gamma \in [0,1)$ is a memory term and $\eta^*$ is a learning rate. The parameter $\phi \in [0,1]$ controls the degree of adaptivity; $\phi=1$ corresponds to the fully adaptive case from which the original method is recovered, while smaller values of $\phi$ increase stability by biasing the distribution towards uniformity. For stability reasons, which becomes particularly important for second-order methods, we have found that normalizing the learning rate as $\eta^*=\eta/\max_{\Omega} q^{k+1}$ is beneficial.

The resulting vector $\lambda^{k+1}$ can be interpreted as the smoothed, unnormalized distribution that guides the optimization. While it incorporates information from the optimal distribution $q^{k+1}$, it is not itself a probability mass function (p.m.f.) as it does not necessarily sum to one.

\subsubsection{Update the model parameters}

Once the smoothed importance weights $\{\lambda_i^{k+1}\}$ have been computed, they can be used to formulate the loss function in two primary ways: by guiding a resampling process or by directly weighting the residuals.

\paragraph{Importance Sampling}
In this approach, the weights are first normalized to recover a smooth probability mass function (p.m.f.) over the discrete domain $\Omega$ 
\begin{align*}
    \bar{q}_i^{k+1} = \frac{\lambda_i^{k+1}}{\sum_j \lambda_j^{k+1}}
\end{align*}
This new distribution $\bar{q}$ is then used to {resample} a new set of training points $\{X_i\}_{i=1}^N$ from the full collocation set $\Omega$. By focusing the sampling on high-importance regions, the loss can be computed as a standard, unweighted mean squared error on this new, more challenging set of points
\begin{align}
\label{RAR_loss}
    \mathcal{L}(\theta^{k+1}) = \frac{1}{N}\sum_{i=1}^N r(X_i, \theta^{k+1})^2.
\end{align}
Notably, this framework can recover methods similar to residual-based adaptive sampling \cite{wu2023comprehensive} by setting the potential to $\Phi(x) = x^2 + 1$ and the EMA parameters to $\eta^*=1$ and $\gamma=0$.

\paragraph{Importance Weighting}
Alternatively, the weights can be used directly in an importance weighting scheme. In this case, the training points $\{X_i\}_{i=1}^N$ are sampled uniformly from $\Omega$. The weights $\{\lambda_i^{k+1}\}$ are then applied directly to the residuals within the loss calculation, creating a weighted objective that reads
\begin{align}
\label{RBA_loss}
    \mathcal{L}(\theta^{k+1}) = \frac{1}{N}\sum_{i=1}^N [\lambda_i^{k+1}r( X_i, \theta^{k+1})]^2.
\end{align}
One might worry that squaring the weights and residual depart from the procedure outlined previously.
By an application of Jensen's inequality, one can obtain that squaring the residual (and weights when applicable) corresponds to solving a strictly stronger problem with the added benefits of differentiability.

This framework is also general enough to recover other popular methods. For example, with the potential $\Phi(x) = x^2+1$ and specific choices of EMA parameters, it is possible to recover the traditional residual-based adaptive weights~\cite{anagnostopoulos2024residual} and their variations~\cite{toscano2025aivt, toscano2025kkans}. Similarly, the formulation proposed in~\cite{si2025convolution} can be recovered by constructing the distribution using a locally averaged residual. Our perspective can also be used to interpret other advanced heuristics; for instance, methods that balance the residual decay rate~\cite{chen2025self} can be viewed as replacing the simple temporal smoothing via EMA with a more sophisticated, history-aware mechanism for computing the adaptive distribution $q^k$.

Once the loss $\mathcal{L}(\theta^{k+1})$ is computed using either method, the model parameters can be updated via a line search algorithm, as Adam \cite{kingma2014adam} or BFGS variations such as SSBroyden \cite{urban2024unveiling}.

\subsubsection{Update the regularization parameter}

As alluded to before, the choice of the annealing schedule depends on the choice of potential $\Phi$.

\paragraph{Case I: Exponential Potential ($\Phi(x)=e^x$)}
For this choice, there is no requirements on the choice of $\epsilon$. 
The particular choice we implemented reads
\begin{align*}
    \epsilon^{k+1}=\frac{c\max_{\Omega} r(x;\theta^k)}{\log(2+k)}.
\end{align*}
This schedule has several advantages. 
Using the maximum residual, $\max_{\Omega}|r|$, in the numerator provides a dynamic, problem-dependent {characteristic scale} for the temperature $\epsilon$. 
This helps to normalize the magnitude of the residuals relative to the magnitude of the solution itself. 
The logarithmic term in the denominator ensures a slow and stable decay, such that $\epsilon^k \to 0$ as $k \to \infty$, which gradually sharpens the distribution's focus on the largest residuals.
In the context of simulated annealing, logarithmic decay is sufficiently slow to guarantee global convergence \cite[Theorem 1]{geman1986diffusions}.

\paragraph{Case II: ($\Phi(x) = x^2 + 1$)}

For this case, the optimality of the distribution $q$ holds under the normalization condition \eqref{eq:p1-phi-normalization}, that is,
\begin{align*}
    \int_\Omega \Phi' \left( \frac{r(x;\theta)}{\epsilon} \right) p(dx) = 1.
\end{align*}
For this case, the constraint is satisfied by choosing $\epsilon^k$ to be the normalizing constant, i.e., the optimal tilt is
\begin{align*}
    q^k(dx) = \frac{r(x;\theta^k)}{\int_\Omega r(x';\theta^k) \, p(dx')} p(dx).
\end{align*}
This is a consequence of the simple structure of the quadratic potential.
Such simplications can hold for more general polynomial potentials (with additive constants) as well.

\subsection{Physics-Informed Neural Networks (PINNs)}
\smallskip

In the PINN framework, the goal is to approximate the solution $\bar{u}(x)$ of a PDE or ODE using a representation model $u(\theta, x)$. The training objective ensures that the model satisfies the governing equations and any available data, which may include boundary conditions, initial conditions, or sparse observations.

The total loss function combines the individual loss terms for the governing equations ($\mathcal{L}_E$), boundary, or initial conditions ($\mathcal{L}_B$), and, for inverse problems, observational data ($\mathcal{L}_D$), which reads
\begin{align}
\label{PIML_loss}
    \mathcal{L} = m_E\mathcal{L}_E + m_B\mathcal{L}_B+m_D\mathcal{L}_D,
\end{align}
where $m_E, m_B, \text{and } m_D$ are global weights that balance the contribution of each term. The individual loss functions ($\mathcal{L}_E$, $\mathcal{L}_B$, $\mathcal{L}_D$) are each computed as described in equation~\eqref{RAR_loss} for the importance sampling approach or as in equation~\eqref{RBA_loss} for the importance weighting. To ensure the update directions induced by the different loss components are balanced, we employ the self-scaling mechanism presented in~\cite{toscano2025kkans}. A detailed description of the proposed method is presented in algorithm~\ref{vRBA_alg}.
\subsection{Operator Learning}

Neural Operators (NOs) learn mappings between function spaces, approximating a solution operator $G_\theta$ that takes an input function, such as a source term $f$, and maps it to the corresponding solution $u$ \cite{lu2021learning, li2020fourier}. They can also be formulated to propagate a solution in time, taking $u(t_0)$ and returning $u(t_0+\Delta t)$. While several variations of operators exist, this study focuses on three popular architectures: DeepONet \cite{lu2019deeponet}, Fourier Neural Operators (FNOs) \cite{li2020fourier}, and the Time-conditioned U-Net \cite{ovadia2025real, gupta2022towards}. A detailed description of these is provided in \ref{Operators_descriptions}. The framework described in Section~\ref{vrba_methods} can be extended to this case, which requires updating our notation.

Let $\mathcal{X}$ be a space of functions over a domain $\Omega_X \subset \mathbb{R}^{d_x}$, and $\mathcal{Y}$ be a space of functions over $\Omega_Y \subset \mathbb{R}^{d_y}$. The operator of interest is
\[
\mathcal{G}: \mathcal{X} \ni v \mapsto \mathcal{G}[v] \in \mathcal{Y}.
\]
The goal is to learn a parametric model $G_\theta$ that approximates $\mathcal{G}$. The residual $R:\mathcal{X}\times \Omega_y\to\mathbb R^+$ for this task is defined as the difference between the operator's prediction and the true solution and reads
\begin{align}
    \label{residual_operator}
    R(v, x; \theta) =| G_\theta(v)(x) - \mathcal G[v](x)|, 
\end{align}
where $v \in \mathcal{X}$ is an input function and $\mathcal{G}[v]$ is the corresponding true output function evaluated at a point $x \in \Omega_Y$. The training data consists of $N_{\text{func}}$ input-output function pairs, $\{v_j, \mathcal G[v_j]\}_{j=1}^{N_{\text{func}}}$, where each output function $\mathcal G[v_j]$ is evaluated at $N$ discrete points $\{x_i\}_{i=1}^{N}$. The standard loss is an average over both the function instances and the spatial points:
\begin{equation}
\label{NO_loss}
    \mathcal{L}(\theta) = \frac{1}{N_{\text{func}}}\sum_{j=1}^{N_{\text{func}}} \frac{1}{N} \sum_{i=1}^{N}[R(v_j, x_i; \theta)]^2
\end{equation}

A single importance sampling or weighting scheme is ill-suited for this problem due to the two distinct levels of discretization (in function space and spatial domains). To address this, we propose a mixed strategy: {importance weighting} is used for the spatial points within each function, while {importance sampling} is used for the functions themselves. This is motivated by the fact that many NOs have a fixed spatial discretization, making weighting a natural fit, while the function space offers more flexibility for sampling.

The loss function for a batch of $b_u$ functions is updated as follows
\begin{equation}
\label{NO_loss_RBA}
    \mathcal{L}(\theta) = \frac{1}{b_u}\sum_{j=1}^{b_u} \frac{1}{N} \sum_{i=1}^{N}[\Lambda_{i,j} R(v_j, x_i; \theta)]^2,
\end{equation}
where the functions $\{v_j\}_{j=1}^{b_u}$ are sampled from the full set of training functions. The term $\Lambda$ is a matrix of importance weights, where $\Lambda_{i,j}$ corresponds to point $x_i$ for function $v_j$. These weights are constructed from a target p.m.f. matrix $Q^k$ constructed based on the choice of potential. For instance when $\Phi(x)=e^{x}$, $Q^k \in \R^{N \times N_{\text{func}}}$ is defined recursively as follows
\begin{align*}
    Q_{i,j}^{k+1}(\theta^k)=\frac{\exp\left(\frac{R(v_j,x_i; \theta^k)}{\epsilon^k}\right)}{\sum_{\ell=1}^{N} \exp\left(\frac{R(v_j,x_\ell; \theta^k)}{\epsilon^k}\right)}. 
\end{align*}
Note that each column of the matrix $Q$ (for a fixed function $j$) is a p.m.f. over the spatial points, focusing attention on high-residual regions for that specific function. The weights are then smoothed over time with an EMA
\begin{align}
\label{lambdas_update_NOs}
    \Lambda_{i,j}^{k+1}=\gamma\Lambda_{i,j}^{k}+\eta^*Q_{i,j}^{k+1}(\theta^k). 
\end{align}
As in the previous case, we can set the learning rate for stability, for example, by normalizing it as $\eta^*=\eta/\max_{j} Q_{i,j}$. Note that this choice of $\eta^*$ achieves a normalization per function which is consistent with our two-level discretization. This EMA formulation has the useful property of keeping the weights bounded. As described in~\cite{anagnostopoulos2024residual}, the update rule ensures that the weights are constrained to the interval $\Lambda_{i,j}\in(0,\frac{\eta^*}{1-\gamma})$, which aids in stabilizing the training process.

A key advantage of this framework is that, if $\eta\neq 1-\gamma$, we can leverage the imbalance on learned spatial weights, $\Lambda_{i,j}$, to construct a sampling distribution over the \textit{functions} themselves. The intuition is that functions with higher overall residuals will naturally accumulate larger $\Lambda$ values over time. Therefore, we propose the following approach to create a function-level sampling distribution. First, we compute an aggregated importance score $s_j$ for each function by summing its spatial weights
\begin{align*}
    s_{j}=\sum_{i=1}^{N}\Lambda_{i,j}.
\end{align*}
These scores are then normalized to create a p.m.f. over the function space:
\begin{align*}
    \bar q_j=\frac{s_j}{\sum_{\ell=1}^{N_{func}}s_\ell}.
\end{align*}
This distribution $\bar{q}$ can then be used to sample the most informative functions $v_j$ for the next training batch. A detailed description of the proposed method is given in Algorithm~\ref {vRBA_NO_alg}
\section{Results}
\label{Results}

\subsection{Physics-Informed Neural Networks}

\subsubsection{Allen-Cahn Equation}

\begin{figure}[H]
    \centering
    \includegraphics[width=1\linewidth]{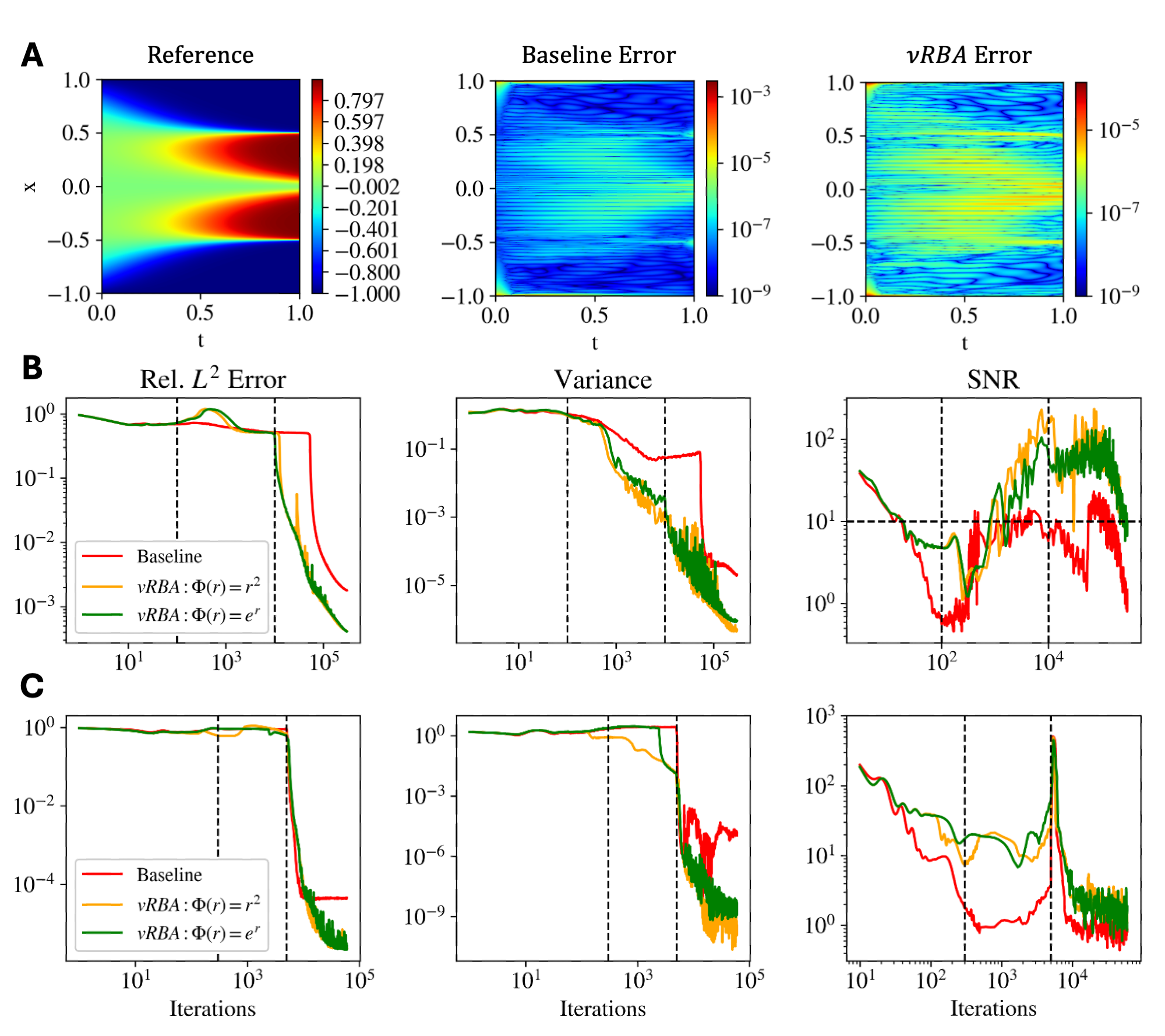}
\caption{
\textbf{Allen-Cahn Equation Results.}
\textbf{(A)} Comparison of the reference solution, baseline model error, and vRBA ($\Phi(r)=e^r$) error. The baseline's error is highly concentrated, while vRBA promotes a more uniform distribution, halving the maximum absolute error.
\textbf{(B)} Training dynamics with a first-order Adam optimizer, showing relative $L^2$ error, residual variance, and Signal-to-Noise Ratio (SNR). vRBA converges significantly faster ($\sim$5k iterations) than the baseline ($\sim$50k), substantially reduces residual variance (implying smaller discretization error), and maintains a higher SNR. All models show three learning stages, but vRBA transitions to the productive diffusion phase more rapidly, improving learning dynamics.
\textbf{(C)} Training dynamics with a second-order SSBroyden optimizer. Even with an advanced optimizer, vRBA achieves superior performance and faster convergence while the baseline stagnates in a local minimum. vRBA again reduces variance by over three orders of magnitude and maintains a much higher SNR, demonstrating that the adaptive scheme's benefits are independent of the optimizer.
} \label{AC}
\end{figure}

The Allen-Cahn equation is a widely recognized benchmark in PINNs due to its challenging characteristics. The 1D Allen-Cahn PDE is defined as:

\begin{equation} \frac{\partial u}{\partial t} = k\frac{\partial^2 u}{\partial x^2} - 5u(u^{2}-1), \end{equation}

\noindent where $k = 10^{-4}$. The problem is further defined by the following initial and periodic boundary conditions:

\begin{equation} u(0, x) = x^2 \cos(\pi x), \quad \forall x \in [-1, 1], \end{equation}

\begin{equation} u(t, x + 1) = u(t, x - 1), \quad \forall t \geq 0 \quad \text{and} \quad x \in [-1, 1]. \end{equation}

\begin{table}[t]
    \centering
    \begin{tabular}{clcccc}
        \toprule
        &\textbf{Model} & \textbf{N. Params} & \textbf{Optimizer}& \textbf{Time (ms/it)} & \textbf{Rel. $L^2$ Error} \\
        \midrule
        A&Baseline       & 21318 &Adam& 4.01 & $1.77 \times 10^{-3}$ \\
        &$vRBA:\Phi(r)=r^2$      & 21318&Adam   & 4.03  & $\mathbf{4.08 \times 10^{-4}}$ \\
        &$vRBA:\Phi(r)=e^r$       & 21318&Adam & 4.47 & $4.14 \times 10^{-4}$ \\
        \midrule
        B&Baseline& 2011&SSBroyden & 22.4 & $4.34 \times 10^{-5}$ \\
        &$v RBA:\Phi(r)=r^2$ &2011&SSBroyden & 24.3&$2.42 \times 10^{-6}$ \\
        &$v RBA:\Phi(r)=e^r$ & 2011&SSBroyden  & 23.7 & $\mathbf{2.14 \times 10^{-6}}$ \\
        \bottomrule
    \end{tabular}
\caption{Comparison of models for the Allen-Cahn equation, where baseline models using uniform sampling are benchmarked against the proposed vRBA method. The table presents two distinct scenarios: (a) vRBA applied as an importance weighting strategy for a large network using a first-order optimizer (Adam), and (b) vRBA applied as importance sampling for a compact network using a second-order optimizer (SSBroyden). Notably, using an adaptive method with either an exponential ($\Phi(r)=e^r$) or quadratic ($\Phi(r)=r^2$) potential reduces the relative $L^2$ error by approximately an order of magnitude, an improvement that holds true even when using a highly optimized second-order method.}

    \label{AC_PIML}
\end{table}
To provide a broad overview of the influence of vRBA, we split this example into two parts. In the first part, following prior work~\cite{zhang2023dasa,anagnostopoulos2024residual,wang2024piratenets, chen2024self}, we use a larger network with Fourier feature embeddings and a first-order Adam optimizer applying vRBA as an importance weighting mechanism. The quantitative results are shown in Table~\ref{AC_PIML}(A). The vRBA methods achieve a significantly lower final relative $L^2$ error, reducing it from $1.77 \times 10^{-3}$ in the baseline to as low as $4.08 \times 10^{-4}$. This more than four-fold improvement is achieved with a negligible increase in computational cost per iteration.

Similarly, Figure~\ref{AC}(B) shows that vRBA significantly accelerates convergence; the vRBA models begin their main convergence phase around 5,000 iterations, whereas the baseline does not start to converge until nearly 50,000 iterations. This enhanced performance can be explained by a twofold mechanism. First, vRBA reduces the variance of the loss estimator, which directly lowers the discretization error. Second, it induces a higher Signal-to-Noise Ratio (SNR), which improves the learning dynamics by enabling a much faster transition to the total diffusion phase. Together, these effects lead to faster convergence and superior model performance.

In the second part, we analyze our model's performance using the recently introduced SSBroyden optimizer~\cite{urban2024unveiling}, which has been successfully applied in PINNs to obtain highly accurate results~\cite{urban2024unveiling, kiyani2025optimizing,wang2024piratenets}. Following~\cite{urban2024unveiling}, we use Fourier embeddings to encode the periodic boundary conditions. The model parameters are initialized with 5,000 Adam iterations, after which we switch to the SSBroyden optimizer. During the SSBroyden phase, we periodically resample the collocation points every 100 iterations using vRBA as an importance sampling strategy. This approach notably enables the use of mini-batches for training, even with a second-order optimizer.

The results, summarized in Table~\ref{AC_PIML}(B), show that this strategy yields a substantial performance gain. The vRBA framework reduces the final relative $L^2$ error by over an order of magnitude, from $4.34 \times 10^{-5}$ for the baseline to $2.14 \times 10^{-6}$. Figure~\ref{AC}(A) provides a qualitative comparison of the final pointwise error for the SSBroyden experiment, showing the results for the vRBA model with an exponential potential. The baseline model's error is {highly concentrated} in specific horizontal bands across the domain. In contrast, the vRBA framework produces a much more {spatially uniform} error distribution. A significant quantitative gain matches this qualitative improvement: the maximum absolute error is reduced by nearly {two orders of magnitude}, from approximately $10^{-3}$ for the baseline to $10^{-5}$ for the vRBA model.

The learning dynamics in Figure~\ref{AC}(C) offer a more nuanced perspective on how this is achieved. During the initial 5,000 Adam iterations, the model is trapped in the transition stage, and the error for the vRBA models does not decrease, despite a relatively high SNR. The transition to the diffusion phase, where the error rapidly converges, happens immediately upon switching to the SSBroyden optimizer. This transition is signaled by a sharp, initial spike in the SNR, after which the SNR drops to a level lower than it was during the Adam phase. This behavior suggests that the critical event for convergence is the initial escape from the transition phase, rather than simply maintaining a high absolute SNR throughout the entire training process.
\begin{table}[H]
    \centering
    \begin{tabular}{p{2.5cm} p{2cm} p{2cm} p{4cm} p{3cm}}
        \toprule
        \textbf{Problem} & \textbf{Reference} & \textbf{Optimizer}& \textbf{Enhancements} & \textbf{Rel. $L^2$ Error}\\
        \midrule
        Allen Cahn& \cite{wang2025gradient}&SOAP & PN, FF, WF,CS,
        LRA& $3.48 \times 10^{-6}$ \\
        
        &\cite{urban2024unveiling} &   SSBroyden& RAD, FF&
        $2.20\times10^{-6}$\\
        &TW& SSBroyden& $vRBA(\Phi=e^r)$, FF&$\mathbf{2.14\times10^{-6}}$\\
        \midrule
        Burgers& \cite{wang2025gradient}&SOAP & PN, FF, WF,CS,
        LRA& $4.03 \times 10^{-5}$ 
        \\
        &\cite{urban2024unveiling} &   SSBroyden& RAD, FF&
        $2.90\times10^{-8}$
        \\
        &\cite{kiyani2025optimizing} &   SSBroyden& RAD, FF, WLS&
        $1.62\times10^{-8}$\\
        
        &TW & SSBroyden&  $vRBA(\Phi=e^r)$,  FF&
        $\mathbf{1.51\times10^{-8}}$\\        \bottomrule
    \end{tabular}
\caption{State-of-the-art comparison of relative $L^2$ errors for the Allen-Cahn and Burgers equations, focusing specifically on the performance of various quasi-Newton (second-order) optimization methods. The results from this work (TW) are benchmarked against reported results in recent literature. As described in the preceding sections, the Residual-based Adaptive Distribution (RAD) is a specific type of vRBA that utilizes a quadratic potential without smoothing. The results in this table, therefore, underscore that the combination of an adaptive sampling strategy with a suitable optimizer is critical for achieving high accuracy. Enhancements from the literature are abbreviated as follows: PirateNet (PN)~\cite{wang2024piratenets}, Fourier Features embeddings (FF)~\cite{Wang2020_Fourier_nets}, Weight Factorization (WF)~\cite{wang2022random}, Causality (CS)~\cite{wang2022respecting}, Learning Rate Annealing (LRA)~\cite{wang2021understanding} and Wolfe Line Search (WLS).}
\label{Burgers_SOTA}
\end{table}

To place our results in context, we compare them against other state-of-the-art  methods for the Allen-Cahn equation in Table~\ref{Burgers_SOTA}. The comparison focuses on highly accurate solutions obtained with quasi-Newton optimizers. Our proposed method, combining the SSBroyden optimizer with vRBA, yields the most accurate result among the compared methods, achieving a relative $L^2$ error of $2.14 \times 10^{-6}$. It is essential to note that this performance is achieved with a minimal set of enhancements, only Fourier Features and our adaptive sampling framework. This contrasts with other approaches that rely on a more extensive suite of techniques yet yield less accurate results. Furthermore, the table highlights that the previous best result for this problem also uses an adaptive method, RAD. As detailed in our theoretical framework, RAD is a specific instance of vRBA that uses a quadratic potential without smoothing. This underscores our main conclusion: the combination of a powerful optimizer with a principled adaptive sampling strategy like vRBA is the critical factor for achieving the highest accuracy.

\subsubsection{Burgers Equation}
\begin{figure}[H]
    \centering
    \includegraphics[width=1\linewidth]{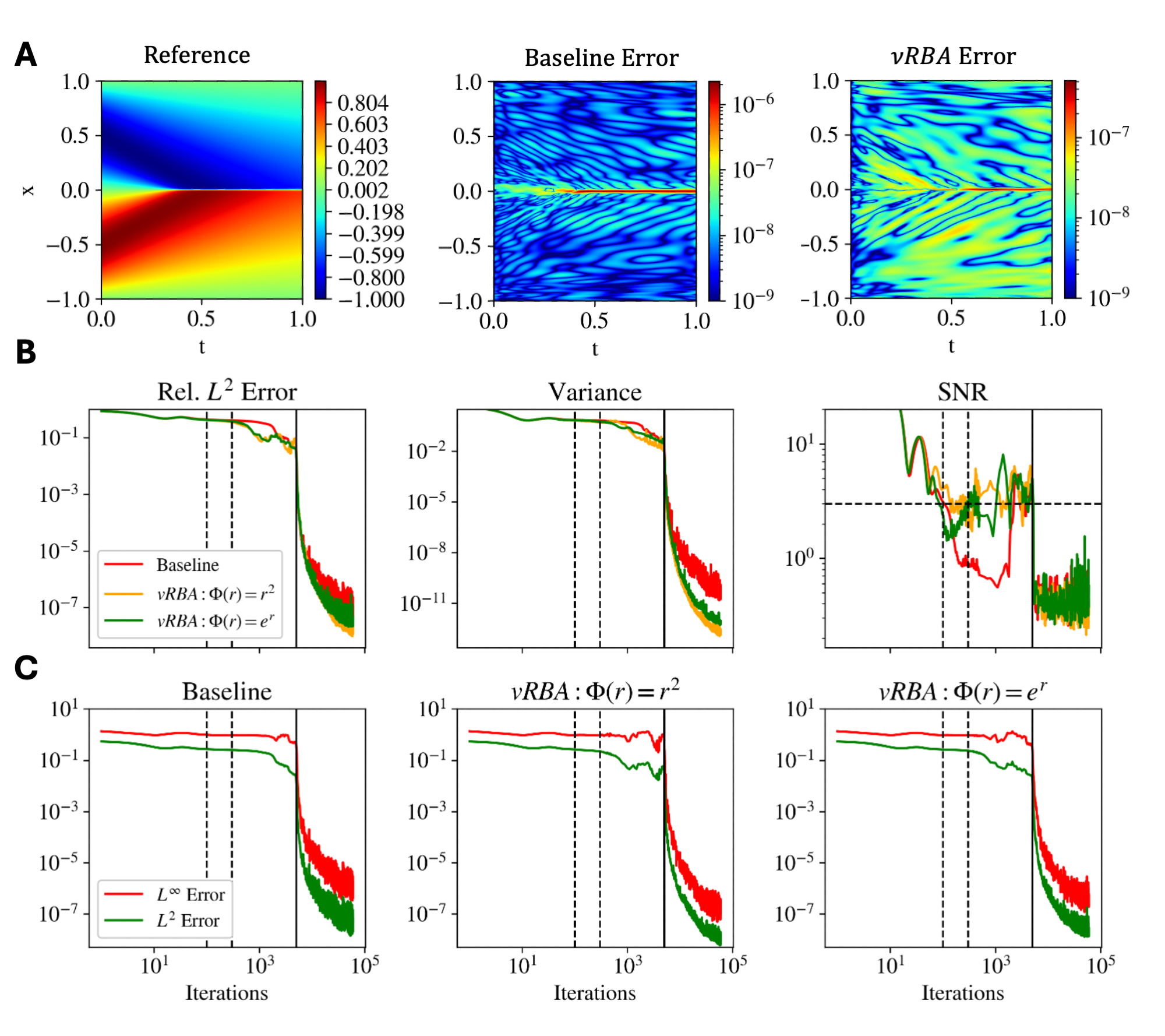}
\caption{
\textbf{Burgers' Equation Results.}
\textbf{(A)} Comparison of the model prediction, reference solution, and pointwise absolute error. The baseline model's error is highly concentrated along the pseudo-discontinuity (shock front), whereas the vRBA framework successfully distributes the error more uniformly across the spatiotemporal domain, resulting in a significantly lower overall error.
\textbf{(B)} Training dynamics showing the convergence of relative $L^2$ error, residual variance, and SNR with the SSBroyden optimizer. Both vRBA models achieve a lower final relative error and reduce the residual variance by three orders of magnitude. The learning dynamics are notably improved, as evidenced by the higher SNR, with the most pronounced gains occurring within the first 1,000 iterations.
\textbf{(C)} Evolution of the $L^2$ and $L^\infty$ error norms during training. For all methods, the two error norms exhibit similar convergence patterns. Crucially, the $L^\infty$ error consistently provides an upper bound for the $L^2$ error, visually confirming that its minimization is a stronger convergence criterion.
}
    \label{Burgers}
\end{figure}

The Burgers' equation is defined as
\begin{equation}
  u_t + uu_x = \nu u_{xx},
  \label{eq:Burgers}
\end{equation}
\noindent where $u$ represents the velocity field, subject to the dynamic viscosity $\nu=1/(100\pi)$. The initial condition and boundary conditions are described as follows
\begin{align}
  u(0, x) = -\sin(\pi x), \quad \forall x \in \Omega,\\
  u(t, -1) = u(t, 1) = 0, \quad \forall t \geq 0,
  \label{Burgers_BC}  
\end{align}

\noindent defined over the domain $\Omega = (-1,1) \times (0,1)$, where $\bm{x} = (x, y)$ signifies the spatial coordinates.

\begin{table}[H]
    \centering
    \begin{tabular}{lcccc}
        \toprule
        \textbf{Model} & \textbf{N. Params} & \textbf{Optimizer}& \textbf{Time (ms/it)} & \textbf{Rel. $L^2$ Error} \\
        \midrule
        Baseline& 2011&SSBroyden & 26.1 & $1.67 \times 10^{-7}$ \\
        $v RBA:\Phi(r)=r^2$ &2011&SSBroyden & 28.2&$1.74 \times 10^{-8}$ \\
        $v RBA:\Phi(r)=e^r$ & 2011&SSBroyden  & 27.6 & $\mathbf{1.51 \times 10^{-8}}$ \\
        \bottomrule
    \end{tabular}
\caption{Performance of the vRBA method for the Burgers equation. The vRBA strategies, using either a quadratic ($\Phi(r)=r^2$) or exponential ($\Phi(r)=e^r$) potential, are benchmarked against a baseline model with uniform sampling. All models are trained with the second-order SSBroyden optimizer. The results demonstrate that the vRBA methods improve the final relative $L^2$ error by an order of magnitude over the baseline, underscoring that an advanced adaptive sampling strategy is critical for achieving high accuracy.}
    \label{Burgers_PIML}
\end{table}

For this example, we follow previous work~\cite{urban2024unveiling}, and enforce the initial and boundary conditions using hard constraints and Fourier embeddings. In this setup, vRBA is again applied as an importance sampling strategy, with the collocation points being resampled every 100 iterations.

The results, presented in Table~\ref{Burgers_PIML}, demonstrate a clear performance advantage for our method, reducing the final relative $L^2$ error by over an order of magnitude, from $1.67 \times 10^{-7}$ to $1.51 \times 10^{-8}$.

A qualitative view of the final error is provided in Figure~\ref{Burgers}(A). The baseline model's error is highly concentrated along the moving shock front, while the vRBA framework produces a much more uniform error distribution. The learning dynamics in Figure~\ref{Burgers}(B) explain how this is achieved, following the same twofold mechanism observed previously. First, the vRBA models reduce the variance of the loss estimator by three orders of magnitude, which lowers the discretization error. Second, they induce a higher SNR and enable a significantly faster transition to the diffusion phase, resulting in improved learning dynamics and faster convergence. Figure~\ref{Burgers}(C) further confirms that the $L^\infty$ error norm consistently bounds the $L^2$ error norm.

Finally, we compare our results to the state of the art in Table~\ref{Burgers_SOTA}. Our approach, which combines the SSBroyden optimizer with the vRBA sampling framework, yields the most accurate result among the compared methods. This highlights that a principled adaptive strategy is a key component for pushing the boundaries of accuracy in PINNs.

\subsection{Operator Learning}
\subsubsection{Bubble Growth Dynamics-DeepONet}
\begin{figure}[H]
    \centering
    \includegraphics[width=1\linewidth]{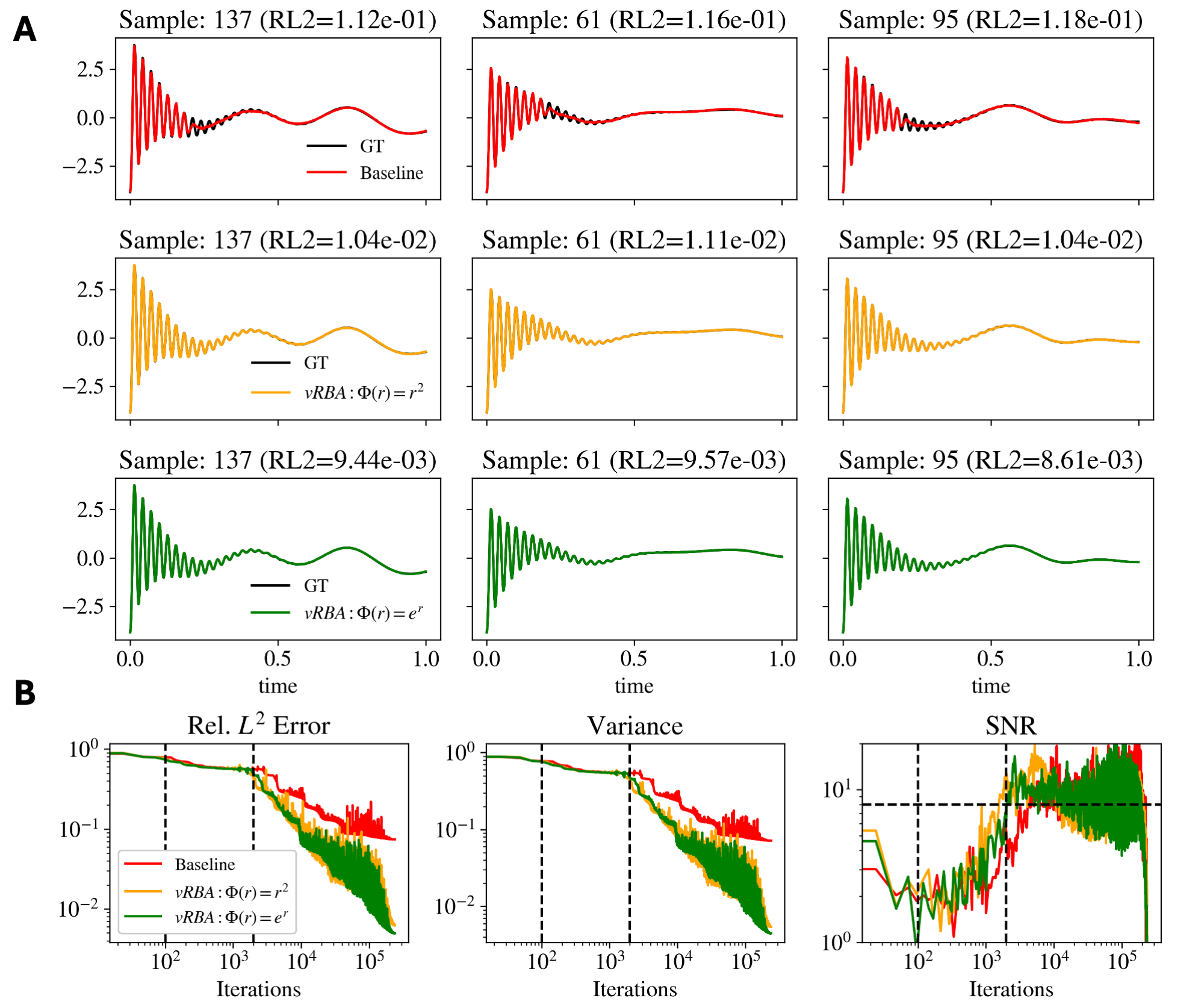}
\caption{
\textbf{Bubble Growth Dynamics with DeepONet.}
\textbf{(A)} Each column displays a different test function, corresponding to the three examples with the highest error for the baseline model. The rows compare the predictions of the baseline model (top), vRBA with a quadratic potential (middle), and vRBA with an exponential potential (bottom) against the ground truth (GT). The baseline model fails to capture the high-frequency oscillations in the bubble's dynamics, while both vRBA methods successfully reconstruct these fine-scale features.
\textbf{(B)} The plots show the convergence of relative $L^2$ error, residual variance, and SNR. The vRBA methods achieve a final error that is an order of magnitude lower than the baseline and reduce the variance by two orders of magnitude. This leads to faster and more stable training, evidenced by a consistently higher SNR. The dashed vertical lines mark the learning phases, showing that the vRBA models transition to the productive diffusion stage more rapidly.
}
    \label{DeepONet}
\end{figure}

We study the dynamics of a single gas bubble in an incompressible liquid governed by the Rayleigh–Plesset (R–P) equation \cite{lin2021operator}, a nonlinear ordinary differential equation describing the evolution of the bubble radius \( R(t) \) under a time-varying pressure field \( P_\infty(t) \). 
Under isothermal assumptions and negligible temperature variations, the simplified linearized R–P equation reads
\begin{equation}\label{eq:RP}
- \frac{\Delta p(t)}{\rho_L} = R_0 \frac{d^2 r}{dt^2} + \frac{4\nu_L}{R_0} \frac{dr}{dt} + \frac{1}{\rho_L R_0} \left(3 P_{G0} - \frac{2\gamma}{R_0} \right) r(t),
\end{equation}
where \( r(t) = R(t) - R_0 \) is the deviation from the initial bubble radius \( R_0 \), \( \rho_L \) is the liquid density, \( \nu_L \) is the kinematic viscosity, \( \gamma \) is the surface tension, and \( P_{G0} \) is the initial gas pressure inside the bubble.

We generate a dataset by numerically solving equation~\eqref{eq:RP} for 1000 independent realizations of the forcing function \( \Delta p(t) \), which is constructed as a product of a Gaussian random field and a smooth ramp function, following the procedure in \cite{lin2021operator}. Specifically, the pressure field is modeled as
\[
\Delta p(t) = g(t) s(t), \quad g(t) \sim \mathcal{GP}( \mu, \sigma^2 k(t_1, t_2) ),
\]
where \( k(t_1, t_2) \) is a squared exponential kernel with correlation length \( \ell \), and \( s(t) \) is a smooth ramp used to induce a sharp initial pressure drop.

The data were split into training, validation, and testing subsets in the ratio 80:10:10. Each simulation yields a trajectory of the bubble radius \( R(t) \), sampled over a fixed time window with initial condition \( R(0) = R_0 \), \( \dot{R}(0) = 0 \). All simulations assume periodic boundary conditions and are performed with parameters corresponding to physical properties of water at room temperature.

To predict the evolution of the bubble radius, we train a DeepONet to learn the mapping from the pressure profile $\Delta p(t)$ to the radius trajectory $R(t)$~\cite{lin2021operator}. For this and all other operator learning tasks, we apply vRBA using a hybrid strategy of importance weighting in the temporal domain and importance sampling over the function space.

The results, summarized in Table~\ref{all_Operators}, show a dramatic improvement. The vRBA framework reduces the final relative $L^2$ error by more than an order of magnitude, from $7.41 \times 10^{-2}$ for the baseline to $4.97 \times 10^{-3}$, with only a minor increase in computational cost per iteration. This quantitative gain is reflected in the qualitative predictions shown in Figure~\ref{DeepONet}(A). For challenging test cases, the baseline model fails to capture the high-frequency oscillations in the bubble's dynamics, whereas both vRBA methods successfully reconstruct these fine-scale features.

The learning dynamics in Figure~\ref{DeepONet}(B) explain this superior performance through the twofold mechanism observed in the PINNs examples. First, vRBA reduces the variance by two orders of magnitude, leading to more stable training. Second, it induces a consistently higher SNR and a much faster transition to the productive diffusion phase, indicating more effective learning.

\begin{table}[H]
    \centering
    \begin{tabular}{lcccc}
        \toprule
        \textbf{Problem}&\textbf{Sampling} & \textbf{N. Params} &  \textbf{Time (ms/it)} & \textbf{Rel. $L^2$ Error} \\
        \midrule
        BGD (DeepONet)&Baseline       & 101100 & 220 & $7.41 \times 10^{-2}$ \\
        &$vRBA:\Phi(x)=r^2$      & 101100   & 230  & $6.24 \times 10^{-3}$ \\
        &$vRBA:\Phi(x)=e^r$      & 101100& 230 & $\mathbf{4.97 \times 10^{-3}}$ \\
        \midrule
        NS (FNO)&Baseline       & 1622849 & 445 & $5.13 \times 10^{-2}$ \\
        &$vRBA:\Phi(x)=r^2$      & 1622849   & 459  & $2.37 \times 10^{-2}$ \\
        &$vRBA:\Phi(x)=e^r$      & 1622849 & 460 & $\mathbf{2.25\times 10^{-2}}$ \\
        \midrule
        WE (TC-UNet)&Baseline      & 2432001 & 810 & $3.69 \times 10^{-2}$ \\
        &$vRBA:\Phi(x)=r^2$      & 2432001   & 866  & $\mathbf{1.00 \times 10^{-2}}$ \\
        &$vRBA:\Phi(x)=e^r$      & 2432001 & 867 & $1.05\times 10^{-2}$ \\
        \bottomrule
    \end{tabular}
\caption{Performance of the vRBA framework on three operator learning benchmarks: bubble growth dynamics (BGD) solved with a DeepONet, the Navier-Stokes (NS) equations for Kolmogorov flow with an FNO, and the wave equation (WE) with a TC-UNet. In all cases, the vRBA method significantly outperforms the baseline model, which uses uniform sampling. The performance gain is most pronounced for the DeepONet architecture, where vRBA reduces the relative $L^2$ error by an order of magnitude, with significant improvements also observed for the FNO and TC-UNet models. For these operator learning tasks, all models were trained with the Adam optimizer, and the vRBA method was applied using a hybrid strategy of importance weighting in the spatial domain and importance sampling over the function space.}
\label{all_Operators}
\end{table}

\subsubsection{Navier Stokes-FNO}

We consider the two-dimensional unsteady Navier–Stokes equations in vorticity formulation, modeling an incompressible, viscous fluid on the periodic domain \((x, y) \in (0, 2\pi)^2\). The system is driven by a Kolmogorov-type external forcing, as previously studied in \cite{chandler2013invariant}, and is governed by:
\begin{equation}\label{eq:NS_new}
\begin{cases}
    \partial_t \omega + \bm{u} \cdot \nabla \omega = \nu \Delta \omega + f(x,y), \\
    \nabla \cdot \bm{u} = 0, \\
    \omega(x, y, 0) = \omega_0(x, y),
\end{cases}
\end{equation}
with viscosity \(\nu = 10^{-3}\), and the source term defined as 
\begin{align*}
    f(x,y) = \chi \left( \sin(2\pi(x + y)) + \cos(2\pi(x + y)) \right),
\end{align*}
where \(\chi = 0.1\). The Laplacian \(\Delta\) acts in two spatial dimensions, \(\omega\) denotes the vorticity, and $\bm{u}$ is the velocity. 

Initial conditions \(\omega_0(x, y)\) are sampled from a Gaussian random field with zero mean and covariance operator \(\mathcal{N}(0, 7^{3/2}(-\Delta + 49 I)^{-5/2})\). To generate the data, we employ a Fourier-based pseudo-spectral solver introduced in \cite{li2020fourier}. The simulation output consists of 1000 spatiotemporal vorticity realizations, each on a \(512 \times 512\) spatial grid, subsequently downsampled to \(128 \times 128\) for downstream learning tasks.

We partition the dataset into training, validation, and testing subsets in an 80:10:10 ratio. A neural operator model \(\mathcal{G}\) is trained to predict evolution of the vorticity field by learning the mapping from the initial condition at \(t=0\) to the interval \(t \in (0, 50]\).

\begin{figure}[H]
    \centering
    \includegraphics[width=0.95\linewidth]{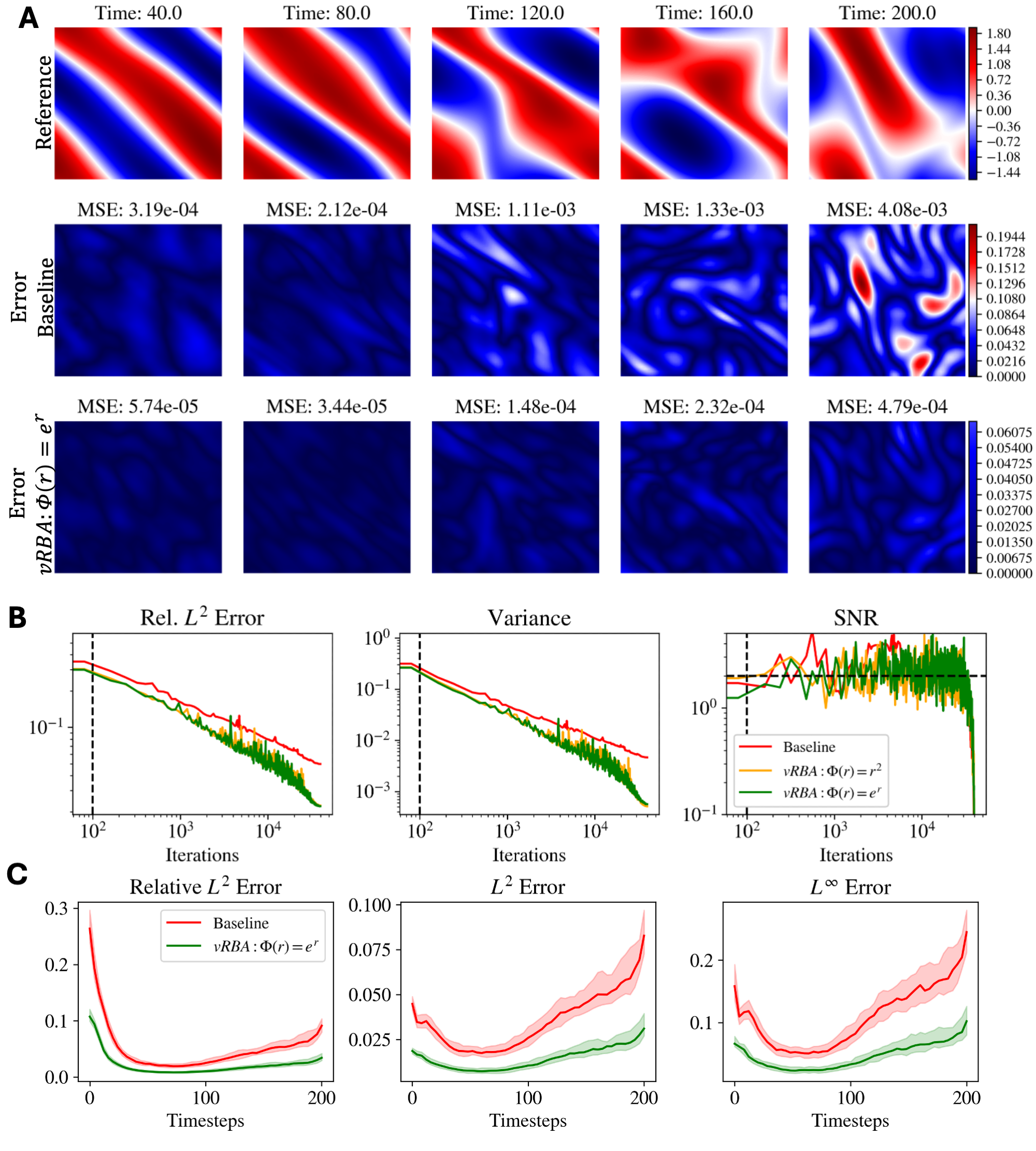}
\caption{
\textbf{FNO Performance on 2D Navier-Stokes (Kolmogorov Flow).}
\textbf{(A)} The rows display the reference vorticity field, the pointwise error of the baseline FNO, and the error of the vRBA-enhanced FNO at five temporal snapshots for a test trajectory. At each timestep, the error distribution is much more uniform, and the Mean Squared Error (MSE) is consistently lower than the baseline.
\textbf{(B)} Training dynamics showing the convergence of relative $L^2$ error, residual variance, and SNR. The vRBA methods achieve a lower final error and smaller variance. Interestingly, the SNR for all models remains high throughout training, suggesting the models start in or near the productive diffusion stage, a behavior possibly attributed to the FNO architecture's Fourier features.
\textbf{(C)} These panels show the mean error norms (solid lines) with standard deviation (shaded areas) evaluated over all test trajectories. The vRBA model exhibits both a lower mean error and a smaller standard deviation, indicating more robust and generalizable performance. Crucially, vRBA also shows a much slower rate of error accumulation over time.
}
    \label{FNO}
\end{figure}

For the 2D Navier-Stokes problem, we train a Fourier Neural Operator (FNO) to learn the mapping from an initial vorticity field $\omega_0(x, y)$ to the full spatiotemporal solution $\omega(x, y, t)$.

The vRBA framework again provides a significant performance boost, as shown in Table~\ref{all_Operators}. It more than halves the final relative $L^2$ error, reducing it from $5.13 \times 10^{-2}$ to $2.25 \times 10^{-2}$, with a negligible impact on the computational time per iteration. The qualitative results in Figure~\ref{FNO}(A) are even more striking, showing that the pointwise Mean Squared Error for the vRBA model is often nearly an order of magnitude lower than the baseline at different temporal snapshots.

The analysis of the error accumulation over the test set in Figure~\ref{FNO}(C) is particularly important. The vRBA model is not only more accurate on average (lower mean error) but also more robust and generalizable (smaller standard deviation). Most critically, it exhibits a significantly slower rate of error accumulation, a key advantage for long-term, stable predictions.

The training dynamics in Figure~\ref{FNO}(B) present a unique behavior. While vRBA still yields a lower final error and reduced variance, the SNR for all models starts and remains high throughout training. This suggests that the FNO architecture, likely due to the strong spectral bias from its built-in Fourier features, begins training in or near the productive diffusion stage, bypassing the typical fitting and transition phases observed in other architectures.

\subsubsection{Wave-Equation- TC-UNet}

We investigate the propagation of acoustic waves governed by the linear wave equation in heterogeneous media. 
In 2D, the governing equation is given by:
\begin{equation}\label{eq:wave_eq}
\begin{cases}
    \partial_t^2 u(\bm{x}, t) = c^2(\bm{x}) \Delta u(\bm{x}, t), & \bm{x} \in [0, \pi]^2, \ t \in [0, 2], \\
    u(\bm{x}, 0) = u_0(\bm{x}), \quad \partial_t u(\bm{x}, 0) = 0, & \bm{x} \in [0, \pi]^2,
\end{cases}
\end{equation}
where \(u(\bm{x}, t)\) represents the acoustic pressure at spatial location \(\bm{x} = (x, y)\), \(c(\bm{x})\) is the spatially varying wave speed, and \(\Delta\) denotes the Laplacian operator. We assume fully reflective (homogeneous Dirichlet) boundary conditions throughout the domain.

For the spatially varying wave speed, we set \(c(x, y) = 1 + \sin(x)\sin(y)\). The initial pressure profile \(u_0(\bm{x})\) is modeled as a localized Gaussian source centered at a point \(\bm{x}_c\), i.e.,
\[
u_0(\bm{x}) = \exp\left(-\frac{\|\bm{x} - \bm{x}_c\|^2}{10}\right),
\]
with \(\bm{x}_c\) sampled randomly on the spatial grid. We solve this system numerically using a second-order finite difference method on a grid with a spatial resolution of \(64 \times 64\) and generate 1000 simulations corresponding to different realizations of \(u_0\). The dataset is partitioned into training, validation, and test sets in the ratio 80:10:10. We train a neural operator \(\mathcal{G}\) to learn the mapping \(u(\bm{x}, 0) \mapsto u(\bm{x}, t)\) for all \(t \in (0, 2]\). 

In our final operator learning example, we train a Time-Conditioned U-Net (TC-UNet) to learn the solution operator for the 2D wave equation, mapping an initial pressure profile $u_0(\bm{x})$ to the full wave propagation over time $u(\bm{x}, t)$.

\begin{figure}[H]
    \centering
    \includegraphics[width=0.95\linewidth]{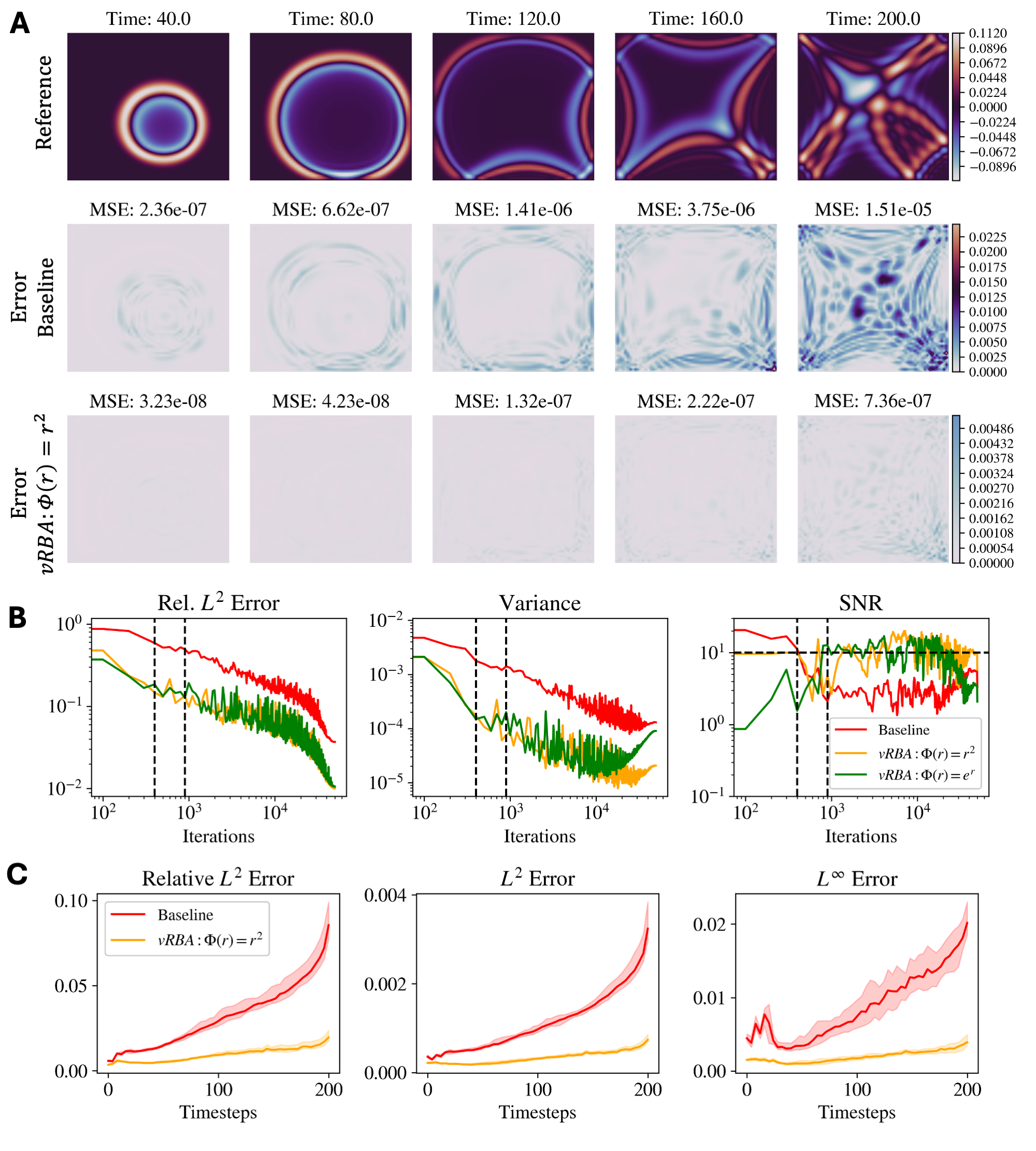}
\caption{
\textbf{TC-UNet Performance on the 2D Wave Equation.}
\textbf{(A)} The rows show the reference solution, baseline pointwise error, and vRBA pointwise error at five temporal snapshots for a representative trajectory from the test dataset. The vRBA method yields a more uniform error distribution and achieves lower Mean Squared Error (MSE).
\textbf{(B)} The plots for relative $L^2$ error, variance, and SNR demonstrate vRBA's superior performance during training. Both vRBA models achieve a lower final error and reduced variance, though the variance for the exponential potential increases late in training, possibly due to overfitting. The SNR plot shows the three phases of learning, with the vRBA methods transitioning to the diffusion phase faster.
\textbf{(C)} These plots show the mean error norms (solid lines) and standard deviation (shaded areas) evaluated over all trajectories in the test set. The vRBA $\Phi(r)=r^2$ model not only has a lower mean error but also a significantly smaller standard deviation, indicating more robust and generalizable performance. Furthermore, the baseline model's error accumulates at a much faster rate.
}    \label{TcUNet}
\end{figure}

As shown in Table~\ref{all_Operators}, vRBA again delivers a substantial improvement, reducing the final relative $L^2$ error by a factor of 3.7, from $3.69 \times 10^{-2}$ down to $1.00 \times 10^{-2}$. The qualitative results in Figure~\ref{TcUNet}(A) for a single test trajectory are particularly compelling, showing that the pointwise MSE for the vRBA model is over an order of magnitude smaller than the baseline, and the error is far more spatially uniform.

The analysis over the full test set in Figure~\ref{TcUNet}(C) confirms the method's effectiveness. The vRBA model is not only more accurate on average but also more robust, as indicated by the smaller standard deviation across all test cases. It also demonstrates a slower rate of error accumulation, a crucial property for predictive accuracy over long time horizons.

The training dynamics in Figure~\ref{TcUNet}(B) mirror the behavior seen in the DeepONet example, showcasing the twofold benefit of vRBA. First, it reduces the variance of the loss estimator, though we note a slight increase late in training for the exponential potential, possibly indicating the onset of overfitting. Second, it induces a higher SNR, allowing the model to transition from the fitting to the diffusion phase more rapidly, which leads to faster and more reliable convergence.

\section{Summary and Discussion}
\label{summary}
In this work, we addressed the largely heuristic nature of residual-based adaptive methods in scientific machine learning. We introduced a unifying variational framework that provides a formal justification and a principled design strategy for these techniques. By leveraging variational representations of integrated-convex functionals, we established a direct link between the form of the adaptive weights or sampling distribution and the primal optimization objective. This connection formally unifies residual-based attention weights (RBA) and residual-based adaptive distribution (RAD) schemes, showing they are different practical implementations of the same underlying principle: optimizing a dual formulation of a chosen primal objective.

The benefits of this framework are manifold and explain the large success of residual-based attention methods from previous studies. It provides a principled origin for heuristics by showing that the choice of potential function dictates the primal optimization objective, such as an exponential potential for $L^\infty$ minimization or a quadratic potential for variance reduction. By promoting a more uniform magnitude for the residuals, vRBA forces the model to capture fine solution details often missed during standard optimization. Furthermore, the framework directly addresses discretization error by reducing the variance of the loss estimator, a benefit we numerically demonstrated to be several orders of magnitude. Moreover, vRBA enhances the learning dynamics; our analysis reveals that vRBA maintains a high signal-to-noise ratio (SNR) of the back-propagated gradients throughout training and transitions to diffusion faster, thereby accelerating convergence.

We demonstrated the efficacy and versatility of vRBA across a range of challenging benchmarks in both PINNs and, notably, operator learning. For the latter, we introduced a hybrid strategy, employing importance sampling over the function space and importance weighting over the spatial domain, which proved particularly effective at reducing the rate of error accumulation. Our empirical results confirmed that vRBA is a critical component for achieving high accuracy, providing substantial improvements even when paired with state-of-the-art second-order optimizers and diverse neural operator architectures.

This work provides a formal basis for the success of residual-based adaptation and opens new avenues for the principled design of effective discretization and optimization strategies. The implications of our variational perspective extend beyond methods that use the residual directly. For instance, the learnable weights proposed in self-adaptive methods, whether treated as trainable parameters or generated by auxiliary networks~\citep{mcclenny2023self, zhang2023dasa}, can be reinterpreted through our framework as a strategy for learning the optimal biasing distribution. This can be viewed as an alternative approach to solving the dual optimization problem, where the distribution is learned rather than derived analytically from a fixed potential. Our framework thus provides a theoretical lens through which an even broader class of adaptive techniques can be unified and analyzed.

\section*{Acknowledgements}
We acknowledge the support of the NIH grant R01AT012312, MURI/AFOSR FA9550-20-1-0358 project, the DOE-MMICS SEA-CROGS DE-SC0023191 award, and the ONR Vannevar Bush Faculty Fellowship (N00014-22-1-2795).

\section*{Data availability}

To support reproducibility, the source code for our implementation and data will be publicly available in our \href{https://github.com/jdtoscano94/Variational-Residual-Based-Attention-vRBA-for-PINNs-and-Operator-Networks-.git}{GitHub repository} upon acceptance of the manuscript.

\bibliographystyle{elsarticle-num} 
\bibliography{cas-refs}
\newpage
\appendix

\section{Representation of divergences}
\label{sec:theory}

The derivation of vRBA uses several variational formulas of statistical divergences and related functionals.
The goal of this appendix is to review the various representations used to derive vRBA with notation consistent with the manuscript.

The first is the Laplace principle, which rewrites the maximum as integration against increasingly singular measures.
While the result is well-known in the context of large deviations, e.g., \cite[Theorem 4.3.1]{dembo2009large} or \cite[Theorem 1.5]{budhiraja2019analysis}, the statement as presented is difficult to find, so we provide the proof for completeness.

\begin{proposition} \label{prop:laplace-principle}
Let $r: \Omega \to \R$ be a bounded, measurable function and let $p \in \mathcal P(\Omega)$ be a probability measure.
Let
\begin{align*}
    M = \inf \left\{ m \in \R: p \left( x \in \Omega: r(x) \leq m \right) = 1 \right\}
\end{align*}
be the essential supremum of $r$.
Then, we have
\begin{align*}
    M = \sup_{\epsilon > 0} \epsilon \log \int_\Omega e^{r(x)/\epsilon} p(dx) = \lim_{\epsilon \to 0} \epsilon \log \int_\Omega e^{r(x)/\epsilon} p(dx).
\end{align*}
\end{proposition}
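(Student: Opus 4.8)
The plan is to prove the two claimed equalities in turn: first the identification of the essential supremum $M$ with $\sup_{\epsilon > 0} \epsilon \log \int_\Omega e^{r(x)/\epsilon}\, p(dx)$, and then the fact that this supremum is attained as the limit $\epsilon \to 0$. For the first equality I would argue by two inequalities. The ``$\geq$'' direction (i.e.\ $\epsilon \log \int e^{r/\epsilon}\,dp \leq M$ for every $\epsilon$) is the easy one: since $r(x) \leq M$ holds $p$-almost everywhere, we have $\int_\Omega e^{r(x)/\epsilon}\, p(dx) \leq e^{M/\epsilon}$, and taking $\epsilon \log(\cdot)$ of both sides gives the bound. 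For the ``$\leq$'' direction, fix any $\delta > 0$; by definition of the essential supremum, the set $A_\delta = \{x : r(x) > M - \delta\}$ has strictly positive measure $p(A_\delta) > 0$. Then
\begin{align*}
    \int_\Omega e^{r(x)/\epsilon}\, p(dx) \geq \int_{A_\delta} e^{r(x)/\epsilon}\, p(dx) \geq e^{(M-\delta)/\epsilon}\, p(A_\delta),
\end{align*}
so $\epsilon \log \int_\Omega e^{r/\epsilon}\, p(dx) \geq (M - \delta) + \epsilon \log p(A_\delta)$. Taking $\epsilon \to 0$ (with $\delta$ fixed) kills the last term since $p(A_\delta)$ is a fixed positive constant, yielding $\liminf_{\epsilon \to 0} \epsilon \log \int e^{r/\epsilon}\,dp \geq M - \delta$, and then letting $\delta \to 0$ gives $\geq M$. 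Combined with the upper bound, this pins the $\liminf$, the $\limsup$, and the $\sup$ all at $M$.

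To make the $\sup = \lim$ claim clean, I would package the monotonicity properly: the upper bound argument shows $\epsilon \log \int e^{r/\epsilon}\,dp \leq M$ for all $\epsilon > 0$, so the $\sup$ over $\epsilon$ is at most $M$; the lower bound argument shows the $\liminf$ as $\epsilon \to 0$ is at least $M$. Since $\liminf_{\epsilon \to 0} \leq \sup_{\epsilon > 0}$ trivially for any family, we get $M \leq \liminf_{\epsilon \to 0} \epsilon \log \int e^{r/\epsilon}\,dp \leq \limsup_{\epsilon \to 0} \leq \sup_{\epsilon>0} \leq M$, so every quantity in the chain equals $M$ and in particular the limit exists and equals the supremum. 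This avoids having to separately verify that $\epsilon \mapsto \epsilon \log \int e^{r/\epsilon}\,dp$ is monotone in $\epsilon$ (which is true — it is related to the fact that $\|\cdot\|_{L^{1/\epsilon}}$-type quantities are monotone — but is not needed for the argument).

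The main technical points to be careful about are: (i) that $r$ bounded guarantees $M < \infty$ and all integrals are finite, so no $\infty - \infty$ or $\log 0$ pathologies arise (if $p$ is a genuine probability measure and $r$ is bounded below as well, $\int e^{r/\epsilon}\,dp \in (0,\infty)$ always); (ii) the strict positivity $p(A_\delta) > 0$, which is exactly the content of $M$ being the \emph{essential} supremum rather than the pointwise supremum — this is where the definition of $M$ as an infimum over a set of full-measure upper bounds gets used; and (iii) handling the edge case where $r$ might not be bounded below, in which case one should note $e^{r/\epsilon} \geq 0$ suffices for the lower bound and one only needs $r$ bounded above (by $M$) for the upper bound, so the hypothesis ``$r$ bounded'' is in fact more than enough. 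I expect no genuine obstacle here; the only place requiring a moment's thought is making sure the interchange of limits (first $\epsilon \to 0$ at fixed $\delta$, then $\delta \to 0$) is presented in the right order so that the stray $\epsilon \log p(A_\delta)$ term vanishes before $p(A_\delta)$ is allowed to degenerate.
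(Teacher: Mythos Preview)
Your proof is correct and, if anything, more streamlined than the paper's. The upper bound is handled identically in both. For the lower bound, however, the paper takes a two-stage route: it first proves the statement when $p$ has finite support (reducing to the elementary computation $\lim_{\epsilon \to 0} \epsilon \log(e^{a/\epsilon} + e^{b/\epsilon}) = \max(a,b)$), and then handles general $p$ by partitioning $\Omega$ into level sets $A_m = \{m/k \leq r \leq (m+1)/k\}$, applying the finite-support case to the resulting sum, and sending $k \to \infty$. Your argument bypasses this discretize-and-approximate structure entirely by working with a single super-level set $A_\delta = \{r > M - \delta\}$ and directly extracting the bound $\epsilon \log \int e^{r/\epsilon}\,dp \geq (M-\delta) + \epsilon \log p(A_\delta)$. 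This is shorter and uses the definition of essential supremum more transparently; the paper's approach, while correct, introduces an intermediate reduction that is not strictly needed here (though it mirrors the standard large-deviations proof template). Your sandwich $M \leq \liminf \leq \limsup \leq \sup \leq M$ also cleanly dispenses with any need to check monotonicity in $\epsilon$, which the paper does not address explicitly either.
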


\begin{proof}
First, observe that 
\begin{align*}
    \epsilon \log \int_\Omega e^{r(x)/\epsilon} p(dx) \leq \epsilon \log \int_\Omega e^{M/\epsilon} p(dx) = M
\end{align*}
for each $\epsilon > 0$.
Therefore, the problem reduces to proving a matching lower bound.
We first prove the statement for the case where $p$ is supported on finitely many points, then proceed via approximation.

\paragraph{\underline{Finite support}}
Without loss of generality, we can reduce the problem to showing for $a, b \in \R$ and $a > b$,
\begin{align*}
    \lim_{\epsilon \to 0} \epsilon \log \left( e^{a/\epsilon} + e^{b/\epsilon} \right) = a.
\end{align*}
This follows from rewriting
\begin{align*}
    \epsilon \log \left( e^{a/\epsilon} + e^{b/\epsilon} \right) = \epsilon \log \left( e^{a/\epsilon} \left( 1 + e^{(b - a)/\epsilon} \right) \right) = a + \epsilon \log \left( 1 + e^{(b - a)/\epsilon} \right)
\end{align*}
and the convergence follows from $b - a < 0$.

\paragraph{\underline{Approximation}}
Without loss of generality, shift and scale $r$ so that $0 \leq r \leq 1$ and $M = 1$.
Fix $k \in \mathbb N$ and define a partition (up to measure-zero sets)
\begin{align*}
    \Omega = \bigcup_{m=0}^{k-1} A_m ~~\text{where}~~A_m = \left\{ x \in \Omega: \frac{m}{k} \leq r(x) \leq \frac{m+1}{k} \right\}.
\end{align*}
Then, we can partition the integral:
\begin{align*}
    \epsilon \log \int_\Omega e^{r(x)/\epsilon} p(dx) &= \epsilon \log \left( \sum_{m=0}^{k-1} \int_{A_m} e^{r(x) / \epsilon} p(dx) \right) \\
    &\geq \epsilon \log \left( \sum_{m=0}^{k-1} p(A_m) e^{m / k \epsilon} \right) \\
    &\to \left( 1 - \frac{1}{k} \right) ~~\text{as}~~\epsilon \to 0,
\end{align*}
where the last equality is by the convergence in the finite-support case.
Finally, taking increasing fine partitions, i.e., sending $k \to \infty$, completes the proof.
\end{proof}

Complementary to the Laplace principle, the Gibbs variational formula gives a representation for the log-integrated-exponentials---also known as the log-partition function.

\begin{proposition}\label{prop:gibbs-variational}
Let $r: \Omega \to \R$ be bounded and continuous and $p \in \mathcal P(\Omega)$ be a probability measure. 
Then, the variational formula holds: 
\begin{align*}
    \log \int_\Omega e^{r(x)} p(dx) = \sup_{q \in \mathcal P(\Omega)} \left\{ \int_\Omega r(x) q(dx) - \mathbf H(q|p) \right\}
\end{align*}
where $\mathbf H$ denotes the relative entropy (defined in \eqref{eq:def-rel-ent}).
Moreover, the optimizer $q^* \in \mathcal P(\Omega)$ that achieves the supremum takes the form
\begin{align*}
    q^*(dx) = \frac{e^{r(x)}}{\int_\Omega e^{r(x)} p(dx)} p(dx).
\end{align*}
\end{proposition}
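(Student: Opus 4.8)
The plan is to prove the identity by exhibiting the candidate optimizer $q^*$ directly and showing it dominates every competitor via the non-negativity of relative entropy. First I would record the preliminaries: since $r$ is bounded, the normalizing constant $Z \coloneqq \int_\Omega e^{r(x)} \, p(dx)$ satisfies $0 < Z < \infty$, so $q^*(dx) = Z^{-1} e^{r(x)} \, p(dx)$ is a well-defined probability measure; moreover $q^*$ and $p$ are mutually absolutely continuous, with Radon--Nikodym derivatives bounded above and below by positive constants. In particular $\mathbf H(q^*|p) = \int_\Omega \big( r(x) - \log Z \big) \, q^*(dx)$ is finite, and a direct computation gives $\int_\Omega r \, dq^* - \mathbf H(q^*|p) = \log Z$, so the supremum on the right-hand side is attained at $q^*$ with value $\log Z$. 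It then remains to show $\log Z$ is an upper bound for the objective over all $q \in \mathcal P(\Omega)$.

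Next I would dispose of the measures $q$ that are not absolutely continuous with respect to $p$: for such $q$ one has $\mathbf H(q|p) = +\infty$ by definition \eqref{eq:def-rel-ent}, while $\int_\Omega r \, dq$ is finite since $r$ is bounded, so the objective equals $-\infty \le \log Z$. Hence it suffices to bound the objective over $q \ll p$.

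The crux is an algebraic identity. Fix $q \ll p$; since $q^*$ and $p$ are mutually absolutely continuous, also $q \ll q^*$, and using $\log \tfrac{dq}{dq^*} = \log \tfrac{dq}{dp} - \log \tfrac{dq^*}{dp}$ together with $\log \tfrac{dq^*}{dp} = r - \log Z$,
\[
\mathbf H(q | q^*) = \int_\Omega \log \frac{dq}{dq^*} \, dq = \mathbf H(q|p) - \int_\Omega \big( r(x) - \log Z \big) \, q(dx),
\]
which rearranges to
\[
\log Z - \left( \int_\Omega r(x) \, q(dx) - \mathbf H(q|p) \right) = \mathbf H(q | q^*).
\]
The main obstacle --- though entirely standard --- is to justify that $\mathbf H(q|q^*) \ge 0$, with equality if and only if $q = q^*$; this is Gibbs' inequality, obtained from Jensen's inequality applied to the strictly convex map $t \mapsto t \log t$ (equivalently, to $-\log$), using that $q^*$ is a probability measure. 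Combining the three steps yields $\int_\Omega r \, dq - \mathbf H(q|p) \le \log Z$ for every $q \in \mathcal P(\Omega)$, with equality exactly when $q = q^*$, which proves both the variational formula and the claimed form of the optimizer. A point to handle with care is the finiteness of each integral appearing in the displayed identity, so that the rearrangement is not of the indeterminate form $\infty - \infty$: when $\mathbf H(q|p) < \infty$ all terms are finite by boundedness of $r$, and the case $\mathbf H(q|p) = +\infty$ was already settled trivially above.
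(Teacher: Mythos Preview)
Your proposal is correct and follows precisely the two-step approach the paper sketches (citing \cite[Proposition 2.2]{budhiraja2019analysis}): verify that the exponential tilt $q^*$ attains the value $\log Z$, then decompose $\mathbf H(q|p)$ through the intermediate measure $q^*$ to reduce the upper bound to the non-negativity of $\mathbf H(q|q^*)$. Your handling of the absolute-continuity case split and the finiteness bookkeeping is more explicit than the paper's sketch, but the method is the same.
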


A proof can be found in, for example, \cite[Proposition 2.2]{budhiraja2019analysis}.
It consists of two steps: first, showing that the identified $q^*$ achieves equality.
Then, decompose the log-likelihood-ratio into ones with respect to the optimizer $q^*$ allows one to use the non-negativity of relative entropy to conclude.

In fact, the Gibbs variational principle is only one part of a duality with the log-partition function.
This is known as the Donsker-Varadhan representation, e.g., \cite[Lemma 2.4]{budhiraja2019analysis}, which gives a representation of the relative entropy as the Legendre-Fenchel transform of the log-partition function:
\begin{align*}
    \mathbf H(p | q) = \sup_{r \in \mathcal M_b(\Omega; \R)} \left\{ \int_\Omega r(x) q(dx) - \log \int_\Omega e^{r(x)} p(dx) \right\}.
\end{align*}
The set $\mathcal M_b(\Omega; \R)$ denotes the set of bounded measurable functionals $r: \Omega \to \R$.
A similar duality theory was established in \cite{birrell2022f}.

\begin{proposition}\label{prop:generalized-gibbs}
Let $r: \Omega \to \R$ be bounded and measurable and let $p \in \mathcal P(\Omega)$ be a probability measure.
Let $\Phi: \R \to \R$ be convex, bounded from below, and superlinear.
Then, we have:
\begin{enumerate}
    \item a generalized Gibbs variational formula: 
    \begin{align*}
        \inf_{\nu \in \R} \left\{ \nu + \int_\Omega \Phi(r(x) - \nu) p(dx) \right\} = \sup_{q \in \mathcal P(\Omega)} \left\{ \int_\Omega r(x) q(dx) + \mathbf D_{\Phi^*}(q|p) \right\};
    \end{align*}
    \item when $r$ additionally satisfies
    \begin{align*}
        \int_\Omega \Phi'(r(x)) p(dx) = 1
    \end{align*}
    where $\Phi'$ is any element in the subdifferential of $\Phi$, then the optimal $\nu$ is zero and
    \begin{align*}
        \int_\Omega \Phi(r(x)) p(dx) = \sup_{q \in \mathcal P(\Omega)} \left\{ \int_\Omega r(x) q(dx) + \mathbf D_{\Phi^*}(q|p) \right\}
    \end{align*}
    with equality achieved when
    \begin{align*}
        q(dx) = \Phi'(r(x)) p(dx).
    \end{align*}
\end{enumerate}
\end{proposition}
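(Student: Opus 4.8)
The plan is to read both statements as Fenchel--Rockafellar duality for a convex problem over the densities $\rho = dq/dp$, and to prove them together in a way that echoes the proof of Proposition~\ref{prop:gibbs-variational}: establish one inequality by a pointwise convex estimate and the matching inequality by exhibiting an optimizer. Throughout, write $\rho = dq/dp \ge 0$ with $\mathbb E_p[\rho] = 1$ for an absolutely continuous $q$ (a $q$ that is not absolutely continuous with respect to $p$ has $\mathbf D_{\Phi^*}(q|p) = +\infty$ and is irrelevant to the supremum, which I read with the sign of \eqref{eq:p1-phi}). The easy half of the first identity is weak duality: for any admissible $\rho$ and any $\nu\in\R$, the Fenchel--Young inequality for the conjugate pair $(\Phi,\Phi^*)$ gives, pointwise, $(r(x)-\nu)\rho(x)\le \Phi(r(x)-\nu)+\Phi^*(\rho(x))$; taking $\mathbb E_p$ and using $\mathbb E_p[\rho]=1$ yields
\[
\mathbb E_q[r] - \mathbf D_{\Phi^*}(q|p) \;\le\; \nu + \mathbb E_p[\Phi(r-\nu)],
\]
and $\sup_q$ followed by $\inf_\nu$ gives ``$\le$''. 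This step, like the convex-analytic preliminaries below, is routine.

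For the reverse inequality I would first observe that $\phi(\nu)\coloneqq \nu + \mathbb E_p[\Phi(r-\nu)]$ is a finite convex function of $\nu\in\R$ (using boundedness of $r$ and finiteness of $\Phi$) that is coercive in both directions: superlinearity of $\Phi$ forces $\phi(\nu)\to+\infty$ as $\nu\to-\infty$, while the explicit linear term together with $\Phi$ bounded below forces $\phi(\nu)\to+\infty$ as $\nu\to+\infty$. Hence $\phi$ attains its minimum at some $\nu^*$, and the optimality condition $0\in\partial\phi(\nu^*)=1-\mathbb E_p[\partial\Phi(r-\nu^*)]$ (invoking the formula for the subdifferential of a convex integral functional together with a measurable selection of subgradients) yields a measurable $h^*$ with $h^*(x)\in\partial\Phi(r(x)-\nu^*)$ and $\mathbb E_p[h^*]=1$. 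Provided $h^*\ge 0$, the measure $q^*(dx)=h^*(x)\,p(dx)$ lies in $\mathcal P(\Omega)$, and because $h^*(x)\in\partial\Phi(r(x)-\nu^*)$ the Fenchel--Young inequality above is an \emph{equality} $p$-a.e., so the chain collapses and $\mathbb E_{q^*}[r]-\mathbf D_{\Phi^*}(q^*|p)=\phi(\nu^*)$, the missing ``$\ge$''. Alternatively one may simply invoke the abstract variational representation of \cite{birrell2022f}, where superlinearity of $\Phi$ (so that $\Phi^*$ is finite on all of $\R$) supplies the qualification condition ruling out a duality gap.

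The second part is then a corollary. Assuming the normalization $\mathbb E_p[\Phi'(r)]=1$ for a measurable selection $\Phi'\in\partial\Phi$, I claim $\nu^*=0$ minimizes $\phi$: by convexity it suffices that $0\in\partial\phi(0)=1-\mathbb E_p[\partial\Phi(r)]$, which is exactly the hypothesis $1\in\mathbb E_p[\partial\Phi(r)]$. Hence $\inf_\nu\phi(\nu)=\phi(0)=\mathbb E_p[\Phi(r)]$, and combining with the first part gives the stated identity. For the optimizer, $q^*(dx)=\Phi'(r(x))\,p(dx)$ is a genuine probability measure: $r\ge 0$ and $\Phi'$ is nonnegative on $\R_+$ (as for the potentials of interest, $\Phi(x)=e^x$ and $\Phi(x)=x^2+1$), while $\mathbb E_p[\Phi'(r)]=1$; and since $\Phi'(r)\in\partial\Phi(r)$, the Fenchel--Young identity $r\,\Phi'(r)=\Phi(r)+\Phi^*(\Phi'(r))$ holds $p$-a.e., so applying $\mathbb E_p$ gives $\mathbb E_{q^*}[r]-\mathbf D_{\Phi^*}(q^*|p)=\mathbb E_p[\Phi(r)]$, i.e.\ $q^*$ attains the supremum.

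The hard part is the strong-duality step of the first identity, where two points need care. The first is the measurable selection of subgradients realizing $\mathbb E_p[h^*]=1$ from the first-order condition. The second, more delicate, is the sign of $h^*$: the candidate $q^*=h^*\,p$ is a probability measure only when $h^*\ge 0$, and without this one is in fact computing the conjugate of $\Phi^*$ restricted to $\R_+$ rather than $\Phi$ itself---these agree precisely when $\partial\Phi\subseteq\R_+$ on the relevant range (e.g.\ $\Phi$ nondecreasing, which holds for the exponential potential everywhere and for the quadratic potential on $\R_+$). In the normalized setting of the second part this difficulty disappears because $\nu^*=0$ and $r\ge 0$; in full generality one restricts $\Phi$ accordingly, uses a truncation/limiting argument, or defers to the machinery of \cite{birrell2022f}. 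Weak duality, the convexity and coercivity of $\phi$, and the equality case of Fenchel--Young are all quick.
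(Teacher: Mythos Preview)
Your proposal is correct and, for part~2, follows exactly the paper's argument: apply Fenchel--Young pointwise to $\Phi^*(\rho)$ and $r$, integrate, and observe equality precisely when $\rho(x)\in\partial\Phi(r(x))$; the paper does not even explicitly check that $\nu^*=0$ as you do via the first-order condition, so your treatment is slightly more complete there. You also correctly flag the sign discrepancy in the statement and work with the sign of \eqref{eq:p1-phi}, which is the intended one.

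For part~1 the two approaches diverge. The paper does not argue at all: it simply cites \cite[Corollary~58]{birrell2022f} and declares the proof too involved to reproduce. Your route---weak duality via Fenchel--Young with the Lagrange multiplier $\nu$, then strong duality by coercivity of $\phi(\nu)=\nu+\mathbb E_p[\Phi(r-\nu)]$, existence of a minimizer $\nu^*$, and a measurable selection $h^*\in\partial\Phi(r-\nu^*)$ with $\mathbb E_p[h^*]=1$---is the standard direct argument, and you are right to isolate the sign of $h^*$ as the only genuinely delicate point (it is automatic when $\Phi$ is nondecreasing, or on $\R_+$ for the quadratic potential, and your fallback to \cite{birrell2022f} for the general case is exactly what the paper itself does). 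What the citation buys is generality without having to handle the sign constraint or the subdifferential-interchange lemma; what your direct argument buys is transparency about \emph{why} the optimizer has the claimed form and where the normalization condition enters, which is precisely the content the paper uses downstream.
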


\begin{proof}
The first item follows from \cite[Corollary 58]{birrell2022f}.
The proof is involved and will not be reproduced here.

The second item follows first from a Donsker-Varadhan-type representation of $\Phi$-divergences, which can be found in, e.g., \cite[Lemma 1]{nguyen2010estimating}.
By Fenchel-Young, 
\begin{align*}
    \mathbf D_{\Phi^*}(q|p) = \int_\Omega \Phi^* \left( \frac{dq}{dp}(x) \right) p(dx) \geq \int_\Omega \Phi \left( r(x) \right) p(dx) + \int_\Omega r(x) q(dx)
\end{align*}
for any bounded, measurable $r: \Omega \to \R$.
Moreover, equality is achieved when for each $x \in \Omega$,
\begin{align*}
    r(x) \in \partial \Phi^* \left( \frac{dq}{dp}(x) \right) ~~\Longleftrightarrow~~ \frac{dq}{dp}(x) \in \partial \Phi(r(x));
\end{align*}
$\partial \Phi$ refers to the subdifferential.
Rearranging yields the desired variational form with the proposed optimizer assuming the normalization condition holds.
\end{proof}

The optimizer for the generalized Gibbs variational formula is generally not identified, necessitating the normalization condition.

Finally, we turn to the functional $\sup_{\epsilon > 0} \Lambda_\epsilon$ defined in \eqref{eq:p1-phi-lambda}.
Unlike in the Laplace case, i.e., $\Phi(r) = e^r -r + 1$, the objective of the primal minimization problem is not transparent.
Below, we show that for the standard quadratic loss corresponds to the primal objective of minimizing variance.
In particular, we choose $\Phi(r) = r^2 + 1$ to correspond to having chi-squared regularization in the dual problem, though the specific choice of constant does not matter.

\begin{proposition}\label{prop:quadratic-potential}
Let $r: \Omega \to \R$ be bounded, measurable and let $p \in \mathcal P(\Omega)$ be a probability measure.
Let $\Phi(r) = r^2 + 1$.
Then, 
\begin{align*}
    \sup_{\epsilon > 0} \Lambda_\epsilon(r) = \sqrt{\mathbb E_p[r^2] - \mathbb E_p[r]^2}.
\end{align*}
\end{proposition}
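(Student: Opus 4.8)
The plan is a direct computation: substitute $\Phi(r)=r^{2}+1$ into the definition \eqref{eq:p1-phi-lambda} of $\Lambda_\epsilon$ and optimize. Since $\Phi$ is increasing on $\R_{+}$ with range $[1,\infty)$, its inverse is $\Phi^{-1}(t)=\sqrt{t-1}$, and the inner integral expands cleanly,
\[
  \int_\Omega \Phi\!\left(\frac{r(x)-\nu}{\epsilon}\right)p(dx)=\frac{1}{\epsilon^{2}}\,\mathbb E_p\!\big[(r-\nu)^{2}\big]+1 ,
\]
so the argument of $\Phi^{-1}$ in \eqref{eq:p1-phi-lambda} is $\tfrac{\nu}{\epsilon}+\tfrac{1}{\epsilon^{2}}\mathbb E_p[(r-\nu)^{2}]+1$. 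Applying $\Phi^{-1}$ and multiplying by $\epsilon$, the powers of $\epsilon$ telescope:
\[
  \epsilon\,\Phi^{-1}\!\left(\frac{\nu}{\epsilon}+\frac{1}{\epsilon^{2}}\mathbb E_p[(r-\nu)^{2}]+1\right)=\sqrt{\,\epsilon\nu+\mathbb E_p[(r-\nu)^{2}]\,},
\]
hence $\displaystyle \Lambda_\epsilon(r)=\inf_{\nu\in\R}\sqrt{\,\mathbb E_p[(r-\nu)^{2}]+\epsilon\nu\,}$.

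Because $t\mapsto\sqrt t$ is increasing, I would then minimize the quantity under the root, which is the quadratic $\nu\mapsto \nu^{2}-(2\mathbb E_p[r]-\epsilon)\nu+\mathbb E_p[r^{2}]$. Completing the square gives the minimizer $\nu^{*}=\mathbb E_p[r]-\tfrac{\epsilon}{2}$ and minimal value $\mathbb E_p[r^{2}]-\mathbb E_p[r]^{2}+\epsilon\,\mathbb E_p[r]-\tfrac{\epsilon^{2}}{4}$, so that
\[
  \Lambda_\epsilon(r)=\sqrt{\,\mathrm{Var}_p(r)+\epsilon\,\mathbb E_p[r]-\tfrac{\epsilon^{2}}{4}\,}.
\]
The claimed identity is then read off from this explicit one‑parameter family: as $\epsilon\downarrow 0$ the $O(\epsilon)$ correction disappears and $\Lambda_\epsilon(r)\to\sqrt{\mathbb E_p[r^{2}]-\mathbb E_p[r]^{2}}$. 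To match the supremum form of the statement one additionally confirms which $\epsilon$‑regime realizes the extremum of $\epsilon\mapsto\Lambda_\epsilon(r)$ — the analogue of the monotonicity of $\epsilon\mapsto\epsilon\log\mathbb E_p[e^{r/\epsilon}]$ exploited in Proposition~\ref{prop:laplace-principle}.

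I expect the only genuinely non‑routine step to be bookkeeping the domain on which the expressions are defined: $\Phi^{-1}$ lives only on $[1,\infty)$, so the infimum over $\nu$ is really taken over $\{\nu:\epsilon\nu+\mathbb E_p[(r-\nu)^{2}]\ge 0\}$. One must check that the unconstrained minimizer $\nu^{*}=\mathbb E_p[r]-\epsilon/2$ lies in this set — which holds exactly when $\mathrm{Var}_p(r)+\epsilon\mathbb E_p[r]-\epsilon^{2}/4\ge 0$, in particular for all sufficiently small $\epsilon$ — and that when the constraint binds the infimum collapses to $0$ and is irrelevant to the extremum. Everything else (the substitution, the telescoping of $\epsilon$, and the completion of the square) is elementary; in particular, for this special quadratic $\Phi$ the full strength of the generalized Gibbs variational formula (Proposition~\ref{prop:generalized-gibbs}) is not needed, only the observation that the optimal shift $\nu^{*}$ vanishes precisely when the normalization $\tfrac{2}{\epsilon}\mathbb E_p[r]=1$ of \eqref{eq:p1-phi-normalization} holds.
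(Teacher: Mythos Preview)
Your direct substitution is correct and in fact more transparent than the paper's argument: the paper does not plug in $\Phi$ explicitly but instead argues abstractly that, for each fixed $\nu$, the map $\epsilon\mapsto\epsilon\,\Phi^{-1}(\cdot)$ is monotone as $\epsilon\to 0$, then interchanges $\lim_{\epsilon\to 0}$ with $\inf_\nu$ via a $\delta$-minimizer compactness argument, and only at the end identifies the remaining minimum as the variance. Your route bypasses all of that and delivers the closed form $\Lambda_\epsilon(r)=\sqrt{\mathrm{Var}_p(r)+\epsilon\,\mathbb E_p[r]-\epsilon^{2}/4}$, from which the $\epsilon\downarrow 0$ limit is immediate.

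There is, however, a genuine gap, and it is precisely the step you flagged but postponed. Your own formula shows that $\epsilon\mapsto\Lambda_\epsilon(r)$ is \emph{not} monotone: the quantity under the root is a downward parabola in $\epsilon$ with vertex at $\epsilon=2\,\mathbb E_p[r]$, so whenever $\mathbb E_p[r]>0$ (the generic case for a nonnegative residual) one gets
\[
   \sup_{\epsilon>0}\Lambda_\epsilon(r)=\sqrt{\mathrm{Var}_p(r)+\bigl(\mathbb E_p[r]\bigr)^{2}}=\sqrt{\mathbb E_p[r^{2}]},
\]
not $\sqrt{\mathrm{Var}_p(r)}$. The ``analogue of the monotonicity in Proposition~\ref{prop:laplace-principle}'' that you invoke therefore fails, and the supremum does \emph{not} coincide with the $\epsilon\downarrow 0$ limit. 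This is not a defect of your method---the algebra is right---but it exposes an inconsistency in the proposition as stated: the main text (equation~\eqref{eq:p1-phi-quad}) actually writes the identity with $\lim_{\epsilon\to 0}$, and the paper's own monotonicity step has the same flaw (for fixed $\nu>0$, $\epsilon\mapsto\sqrt{\epsilon\nu+\mathbb E_p[(r-\nu)^{2}]}$ is increasing in $\epsilon$). Your explicit formula is what makes the issue visible; the statement that your computation proves is the one with the limit, not the supremum.
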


\begin{proof}
First, observe that, for each fixed $\nu \in \R$, the map
\begin{align*}
    \epsilon \mapsto \epsilon \Phi^{-1} \left( \frac{\nu}{\epsilon} + \mathbb E_p \left[ \Phi \left( \frac{r-\nu}{\epsilon} \right) \right] \right)
\end{align*}
monotonically increases as $\epsilon \to 0$.
This is by the monotonicity of $\Phi$, $\Phi^*$, and $\epsilon \mapsto 1/\epsilon$ as well as the non-negativity of $\epsilon$.
Thus, we can replace the supremum by a limit as $\epsilon \to 0$.

Now, we wish to compute 
\begin{align*}
    \lim_{\epsilon \to 0} \inf_{\nu \in \R} \left\{ \epsilon \Phi^{-1} \left( \frac{\nu}{\epsilon} + \mathbb E_p \left[ \Phi \left( \frac{r-\nu}{\epsilon} \right) \right] \right) \right\}
\end{align*}
by switching the limit and infimum.
For any $\delta > 0$, let $(\nu^\epsilon)_{\epsilon > 0}$ be a collection of $\delta$-minimizers of the inner infimum, that is, 
\begin{align*}
    \epsilon \Phi^{-1} \left( \frac{\nu^\epsilon}{\epsilon} + \mathbb E_p \left[ \Phi \left( \frac{r-\nu^\epsilon}{\epsilon} \right) \right] \right)  - \delta \leq \inf_{\nu \in \R} \left\{ \epsilon \Phi^{-1} \left( \frac{\nu}{\epsilon} + \mathbb E_p \left[ \Phi \left( \frac{r-\nu}{\epsilon} \right) \right] \right) \right\}.
\end{align*}
Moreover, the map 
\begin{align*}
    \nu \mapsto \epsilon \Phi^{-1} \left( \frac{\nu}{\epsilon} + \mathbb E_p \left[ \Phi \left( \frac{r-\nu}{\epsilon} \right) \right] \right)
\end{align*}
is coercive by the superlinearity of $\Phi$ and uniformly so in $\epsilon$.
Thus, $(\nu^\epsilon)_{\epsilon > 0}$ is precompact and has a subsequential limit point, which we denote by $\nu^*$.
Now, we have that
\begin{align*}
    \liminf_{\epsilon \to 0} \inf_{\nu \in \R} \left\{ \epsilon \Phi^{-1} \left( \frac{\nu}{\epsilon} + \mathbb E_p \left[ \Phi \left( \frac{r-\nu}{\epsilon} \right) \right] \right) \right\} &\geq \liminf_{\epsilon \to 0} \epsilon \Phi^{-1} \left( \frac{\nu^\epsilon}{\epsilon} + \mathbb E_p \left[ \Phi \left( \frac{r-\nu^\epsilon}{\epsilon} \right) \right] \right)  - \delta \\
    &= \liminf_{\epsilon \to 0} \Phi^{-1} \left( \epsilon \nu^\epsilon + \mathbb E_p \left[ \Phi \left( r-\nu^\epsilon\right) \right] \right)  - \delta \\
    &= \liminf_{\epsilon \to 0} \epsilon \Phi^{-1} \left( \mathbb E_p \left[ \Phi \left( r-\nu^* \right) \right] \right)  - \delta \\
    &= \Phi^{-1} \left( \inf_{\nu \in \R} \mathbb E_p[\Phi(r-\nu)] \right) - \delta.
\end{align*}
On the other hand, we have straightforwardly by comparison that
\begin{align*}
    \lim_{\epsilon \to 0} \inf_{\nu \in \R} \left\{ \epsilon \Phi^{-1} \left( \frac{\nu}{\epsilon} + \mathbb E_p \left[ \Phi \left( \frac{r-\nu}{\epsilon} \right) \right] \right) \right\} &\leq \inf_{\nu \in \R} \lim_{\epsilon \to 0} \left\{ \epsilon \Phi^{-1} \left( \frac{\nu}{\epsilon} + \mathbb E_p \left[ \Phi \left( \frac{r-\nu}{\epsilon} \right) \right] \right) \right\} \\
    &= \Phi^{-1} \left( \inf_{\nu \in \R} \mathbb E_p[\Phi(r-\nu)] \right).
\end{align*}
Therefore, sending $\delta \to 0$ shows that the limit and infimum can be interchanged.

Finally, we solve the variational problem by checking first-order optimality.
The problem is now convex, and straightforward computation yields that the optimal $\nu$ is achieved by choosing the mean.
This concludes the proof.
\end{proof}
\section{Physics Informed Machine Learning}
\subsection{Global weights}
Notice that for first-order optimizers such as ADAM, the update direction (i.e., equation \eqref{update_discrete}) for PINNs (i.e., equation \eqref{PIML_loss}) is given by:
\begin{align}
    p^k &= -m_E\nabla_{\theta}\mathcal{L}_E(\theta^{k}) - m_B\nabla_{\theta}\mathcal{L}_B(\theta^{k}) - m_D\nabla_{\theta}\mathcal{L}_D(\theta^{k}),
\end{align}
where $\nabla_{\theta}\mathcal{L}_{E}$, $\nabla_{\theta}\mathcal{L}_{B}$, and $\nabla_{\theta}\mathcal{L}_{D}$ are the loss gradients which can be represented as high-dimensional vectors defining directions to minimize their respective loss terms. Notice that if the gradient magnitudes are imbalanced, one direction will dominate, which may lead to poor convergence.  To address this challenge, we propose modifying the magnitude of the individual directions by scaling their respective global weights. In particular, we fix $m_E$ and update the remaining global weights using the rule:
\begin{align}
\label{boundary_weight}
    m_B^k &= \alpha m_B^{k-1} + (1-\alpha) \frac{\|\nabla_{\theta}\mathcal{L}_E\|}{\|\nabla_{\theta}\mathcal{L}_B\|}, \\
\label{data_weight}
    m_D^k &= \alpha m_D^{k-1} + (1-\alpha) \frac{\|\nabla_{\theta}\mathcal{L}_E\|}{\|\nabla_{\theta}\mathcal{L}_D\|},
\end{align}
where $\alpha \in [0,1]$ is a stabilization parameter~\cite{wang2021understanding}. This formulation computes the iteration-wise average ratio between gradients, enabling normalized scaling, which, on average, allows us to define a balanced update direction $\hat{p}^k$:
\begin{align}
\label{update_dir_grads}
    \hat{p}^k &\approx -m_E\|\nabla_{\theta}\mathcal{L}_E\| \left[ \nabla_{\theta}\mathcal{L}_E(\theta^{k}) - \frac{\nabla_{\theta}\mathcal{L}_B(\theta^{k})}{\|\nabla_{\theta}\mathcal{L}_B\|} - \frac{\nabla_{\theta}\mathcal{L}_D(\theta^{k})}{\|\nabla_{\theta}\mathcal{L}_D\|} \right].
\end{align}

Under this approach, all loss components have balanced magnitudes, allowing each optimization step to minimize all terms effectively.

\subsection{Algorithm for PINNs}

\begin{algorithm}[H]
\small
  \caption{$vRBA$ for PINNs}
  \label{vRBA_alg}
  \textbf{Input:}
  Representation model: $\mathcal{M}$\\
  Training points: $X_B$,$X_D$,$X_E$\\
  Optimizer parameters: $lr$ 
  vRBA parameters: $\eta$, $\lambda_{max_0}$,$\lambda_{cap}$, $\alpha_g$, $m_E$, $\gamma_{g}$\\
  Number of iterations per stage: $N_{stage}$\\
  Total number of training of iterations: $N_{train}$\\
  Boolean flags: \textit{adaptive weights}, \textit{adaptive distribution}
  
 \textbf{Output:} 
 Optimized network parameters $\theta$
  \begin{algorithmic}[1]
  \STATE Initialize the network parameters: $\theta$ 
    \STATE Initialize RBA: $\lambda^0_{\alpha,i}=0.1\lambda_{max0}$  $\forall \alpha,i$  with $\alpha=\{B,D,E\}$\\
    \STATE Initialize uniform distribution: $\bar q_{\alpha}^{k}$  with $\alpha=\{B,D,E\}$\\
  \FOR{$k<N_{train}$}
  \STATE Update maximum RBA upper bound: $\lambda_{max}=\min(\lambda_{max0}+k/N_{stage},\lambda_{cap})$
  \STATE Update decay rate: $\gamma^k=1-\eta/\lambda_{max}$
  \FOR{each $\alpha\in\{B,D,E\}$}
  \IF{\textit{adaptive distribution}}
  \STATE Update the sampling p.m.f: $   \bar q_{\alpha}^{k}\leftarrow(\bm{\lambda}_{\alpha}^{k})/{\sum(\bm{\lambda}_{\alpha}^{k})}$
  \ENDIF    
  \STATE Sample $bs$ points from $X_{\alpha}$: $X^k_{\alpha}\sim \bar q_{\alpha}^{k}$
  \STATE Compute network prediction: $u_{\alpha,i}\leftarrow\mathcal{M}(\theta, x^k_{\alpha,i})$, $\forall x_{\alpha,i}\in X^k_{\alpha} $
  \STATE Compute residuals: $r_{\alpha,i}^{k}$ using $u_{\alpha,i}$ and equations~\ref{F_fit_loss}, or ~\ref{PDE_Loss}.
      \STATE Update tilted distribution: $q_{\alpha,i}^{k}$ using equation~\ref{exp_dist} or \ref{quad_dist}
      \STATE Exponential moving average: $\lambda_{\alpha,i}^{k} \leftarrow \gamma^k\lambda_{\alpha,i}^{k-1}+\eta^*q_{\alpha,i}^{k}$
      \IF{\textit{adaptive weights}}
      \STATE Compute loss term: $\mathcal{L}^k_{\alpha}=\langle(\lambda_{\alpha,i}^{k}r_{\alpha,i}^{k})^2\rangle$
      \ELSE
      \STATE Compute loss term: $\mathcal{L}^k_{\alpha}=\langle(r_{\alpha,i}^{k})^2\rangle$
      \ENDIF

      \STATE Compute gradient:$\nabla_{\theta}\mathcal{L}^k_{\alpha}$
      \STATE Compute the average gradient magnitude :$\|\nabla_{\theta}\bar{\mathcal{L}}^k_{\alpha}\|=\gamma_g\|\nabla_{\theta}\bar{\mathcal{L}}^{k-1}_{\alpha}\|+(1-\gamma_g)\|\nabla_{\theta}\mathcal{L}^{k-1}_{\alpha}\|$
    \ENDFOR
    \STATE Update data global weight: $m_D^{k}=\alpha_g m_D^{k-1}+(1-\alpha_g)m_E\|\nabla_{\theta}\bar{\mathcal{L}_E}^k\|/{\|\nabla_{\theta}\bar{\mathcal{L}_D}^k\|}$
    \STATE Define total update direction: $     p^k\leftarrow-m_E\nabla_{\theta}\mathcal{L}_E^{k}-m_D^k\nabla_{\theta}\mathcal{L}_D^{k}$
    \STATE Update parameters: $\theta^{k+1}\leftarrow\theta^{k}+lr^kp^k$
  \ENDFOR
  \end{algorithmic}
\end{algorithm}

\section{Operator Learning}
\label{Operators_descriptions}
Operator learning aims to approximate mappings between infinite-dimensional function spaces, inspired by the universal approximation theorem for nonlinear operators introduced by Chen and Chen \cite{chen1995universal}. 
In contrast to traditional supervised learning, which seeks point-wise mappings, operator learning targets functional input–output relationships, such as solution operators of PDEs. 
Operator learning is a suitable approach for problems where solutions must be inferred across varying initial or boundary conditions, enabling fast inference once trained. 
In this study, we consider DeepONet, Fourier Neural Operator (FNO), and Time-Conditioned UNet (TC-UNet) based architectures.

\subsection{DeepONet}
DeepONet consists of two networks - a trunk network and a branch network. 
The trunk network encodes spatial coordinates and learns a basis in the target function space, while the branch network maps the input function, evaluated at a fixed set of sensors, to coefficients that project onto this learned basis. 
The resulting dot product yields the output function at each spatial location. 
This design is rooted in the operator approximation theorem and enables expressive and efficient modeling of nonlinear operators. 
DeepONet and its variants are widely applied in mechanics \cite{kiyani2025predicting}, high-speed flows \cite{peyvan2024riemannonets}, materials science \cite{oommen2022learning} and multi-phase flows \cite{lin2021seamless}. 

\subsection{FNO}

FNO learn solution operators by leveraging spectral convolutions in the Fourier domain. 
The input function is first lifted to a high-dimensional latent space through pointwise linear transformations. 
A Fourier transform is applied to these lifted features, enabling convolutional operations to be performed as multiplications in frequency space. 
High-frequency modes are typically truncated to enforce smoothness, reduce overfitting, and improve training dynamics. 
The result is then transformed back to physical space via the inverse Fourier transform and projected to the target dimension. 
The global receptive field of FNOs makes them particularly effective for modeling long-range dependencies in solutions to PDEs, as demonstrated in applications such as weather forecasts \cite{kurth2023fourcastnet}, porous media flows \cite{kashefi2024novel}, and turbulence \cite{li2022fourier}.

\subsection{TC-UNet}

Unlike FNOs, TC-UNet \cite{ovadia2025real, gupta2022towards} operates entirely in physical space using local convolutions. 
The architecture is based on a UNet, a hierarchical fully convolutional neural network that captures multiscale features through successive downsampling and upsampling. 
TC-UNet uses time conditioning via feature-wise linear modulation (FiLM) \cite{perez2018film}, applied at each level of the hierarchy. 
This allows the model to adaptively modulate intermediate features based on the time coordinate input, enabling accurate modeling of spatiotemporal dynamics.
TC-UNet or UNet-based architectures are particularly well-suited for problems characterized by sharp gradients \cite{peyvan2024riemannonets} or fine-scale structures \cite{khodakarami2025mitigating} and are, in general, more robust to spectral bias \cite{oommen2024integrating, oommen2025equilibrium} compared to other neural operator architectures.
\subsection{Algorithm for Operator Learning}
\begin{algorithm}[H]
\small
\caption{$vRBA$ for Operator Learning}
\label{vRBA_NO_alg}
\textbf{Input:} \\
\quad Representation model (Neural Operator): $G_\theta$\\
\quad Training data: $\{v_j, u_j\}_{j=1}^{N_{func}}$ (a set of input/output function pairs) \\
\quad Spatial points per function: $N$\\
\quad Optimizer parameters: $lr$ \\
\quad vRBA parameters: $\eta, \lambda_{max_0}, \lambda_{cap}, \gamma$ \\
\quad Batch size for functions: $b_u$ \\
\quad Number of iterations per stage: $N_{stage}$\\
\quad Frequency of distribution update: $N_{update}$ \\
\quad Total number of training iterations: $N_{train}$
 
\textbf{Output:} Optimized network parameters $\theta$

\begin{algorithmic}[1]
\STATE Initialize the network parameters: $\theta^0$
\STATE Initialize weights: $\Lambda^0_{i,j} = 0.1\lambda_{max0}$ for $i=1..N, j=1..N_{func}$
\STATE Initialize function sampling p.m.f. uniformly: $\bar{q}_j^0 = 1/N_{func}$ for $j=1..N_{func}$

\FOR{$k=0, 1, \dots, N_{train}-1$}
    \STATE Update maximum RBA upper bound: $\lambda_{max} = \min(\lambda_{cap}, \lambda_{max0} + k/N_{stage})$
    \STATE Update EMA decay rate: $\gamma^k = 1 - \eta/\lambda_{max}$
    
    \STATE Sample a batch of $b_u$ function indices $\mathcal{J}_k \sim \bar{q}^k$
    
    \STATE Compute residuals for the batch: $R_{i,j}^k = G_{\theta^k}(v_j)(x_i) - u_j(x_i)$ for $i \in \{1..N\}, j \in \mathcal{J}_k$
    \STATE Update target distribution matrix for the batch: $Q_{i,j}^{k+1}$ using $|R_{i,j}^k|$ (via Eq.~\ref{exp_dist} or \ref{quad_dist})
    \STATE Update weights for the batch via EMA: $\Lambda_{i,j}^{k+1} \leftarrow \gamma^k \Lambda_{i,j}^{k} + \eta^* Q_{i,j}^{k+1}$ for $j \in \mathcal{J}_k$
    
    \STATE Compute the weighted loss for the batch: 
    $\mathcal{L}^k = \frac{1}{b_u N} \sum_{j \in \mathcal{J}_k} \sum_{i=1}^{N} [\Lambda_{i,j}^{k+1} R_{i,j}^k]^2$
    
    \STATE Compute gradient of the loss: $g^k = \nabla_{\theta} \mathcal{L}^k|_{\theta=\theta^k}$
    \STATE Update parameters: $\theta^{k+1} \leftarrow \theta^k - lr^k g^k$
    
    \IF{$k \pmod{N_{update}} == 0$}
        \STATE Aggregate importance scores for all functions: $s_j^{k+1} = \sum_{i=1}^{N} \Lambda_{i,j}^{k+1}$ for $j=1..N_{func}$
        \STATE Normalize scores to form new p.m.f.: $\bar{q}_j^{k+1} = s_j^{k+1} / \sum_{\ell=1}^{N_{func}} s_\ell^{k+1}$
    \ENDIF

\ENDFOR
\end{algorithmic}
\end{algorithm}

\section{Implementation Details}

\subsection{Physics-Informed Neural Networks}
For our Physics-Informed Neural Network (PINN) benchmarks, we detail two separate experimental setups based on the optimization strategy employed.

\subsubsection{First-Order Optimization}
For the Allen-Cahn equation solved with a first-order optimizer, the network architecture and hyperparameters were adapted from previous work in \cite{toscano2025kkans}. The specific implementation details are summarized in Table \ref{tab:ac_first_order}. This setup utilizes the Adam optimizer for the entire training duration and applies vRBA as an importance weighting scheme.

\begin{table}[H]
\centering
\begin{tabular}{|l|c|}
\hline
\textbf{Hyperparameter} & \textbf{Allen-Cahn} \\
\hline
N. of Adam training iterations & 3e5 \\
N. of SSBroyden training iterations & 0 \\
Number of hidden layers $N$ & 6 \\
Hidden layer dimension $H$ & 64 \\
Activation function & $\tanh(\cdot)$ \\
Fourier Feature embedding degree~\cite{wang2021eigenvector} & 10 \\
Initialization & $U\left(-\sqrt{\frac{3}{I}}, \sqrt{\frac{3}{I}}\right)$ \\
Learning rate $lr$ & 1e-3 \\
$lr$-Decay rate & 0.9 \\
$lr$-Decay step & 5000 \\
Total number of points & 2.56e4 \\
Batch size & 1e4 \\
\hline
\multicolumn{2}{|c|}{\textbf{vRBA Parameters (Weighting)}} \\
\hline
$\gamma$ (EMA memory) & 0.999 \\
$\eta$ (EMA learning rate) & 0.01 \\
$\phi$ (Smoothing) & 0.8 \\
\hline
\multicolumn{2}{|c|}{\textbf{Self-Scaling Parameters}} \\
\hline
$\lambda_{max0}$ (Initial max weight) & 10 \\
$\lambda_{cap}$ (Weight cap) & 20 \\
$\gamma_g$ (Gradient EMA memory) & 0.99 \\
$\alpha_g$ (Global weight EMA memory) & 0.99975 \\
$N_{stage}$ (Iterations per stage) & 50000 \\
$m_E$ (Equation loss weight) & 1.0 \\
\hline
\end{tabular}
\caption{Implementation details for the Allen-Cahn equation using a first-order Adam optimizer. The self-scaling strategy and hyperparameters are based on \cite{toscano2025kkans}. Note that for experiments using the quadratic potential ($\Phi(r)=r^2$), no smoothing was applied ($\phi=1.0$).}\label{tab:ac_first_order}
\end{table}

\subsubsection{Second-Order Optimization}
For the experiments involving a second-order optimizer for both the Allen-Cahn and Burgers' equations, we followed the methodology presented in \cite{urban2024unveiling}. The training begins with 5,000 Adam iterations for robust initialization, after which we switch to the SSBroyden optimizer for the remainder of the training. In this setup, vRBA is applied as an importance sampling strategy, where the collocation points are resampled every 100 iterations. The relevant hyperparameters are detailed in Table \ref{tab:second_order}.
\begin{table}[H]
\centering
\begin{tabular}{|l|c|c|}
\hline
\textbf{Hyperparameter} & \textbf{Allen-Cahn} & \textbf{Burgers} \\
\hline
N. of Adam training iterations & 5e3 & 5e3 \\
N. of SSBroyden training iterations & 6e4 & 6e4 \\
Number of hidden layers $N$ & 3 & 3 \\
Hidden layer dimension $H$ & 30 & 30 \\
Activation function & $\tanh(\cdot)$ & $\tanh(\cdot)$ \\
Periodicity Encoding & $\sin(\cdot), \cos(\cdot)$ & $\sin(\cdot), \cos(\cdot)$ \\
\hline
\multicolumn{3}{|c|}{\textbf{vRBA Parameters (Sampling)}} \\
\hline
Resampling Frequency & 100 iter. & 100 iter. \\
$\gamma$ (EMA memory) & 0.9 & 0.9 \\
$\eta$ (EMA learning rate) & 0.1 & 0.1 \\
$\phi$ (Smoothing) & 0.9 & 1.0 \\
\hline
\end{tabular}
\caption{Implementation details for the Allen-Cahn and Burgers' equations using a second-order SSBroyden optimizer, following the methodology in \cite{urban2024unveiling}.}
\label{tab:second_order}
\end{table}

\subsection{Operator Learning}
For the operator learning benchmarks, the architectures were chosen based on the specific model and task.

\subsubsection{DeepONet}
For the Bubble Growth Dynamics task, we employed a DeepONet architecture. The model consists of a branch network to process the input function and a trunk network to process the spatial/temporal coordinates. The implementation details, including the hybrid vRBA strategy, are summarized in Table \ref{tab:deeponet_full_details}.

\begin{table}[H]
\centering
\begin{tabular}{|l|l|c|}
\hline
\textbf{Category} & \textbf{Hyperparameter} & \textbf{Value} \\
\hline
\multirow{3}{*}{Branch Network} & Number of hidden layers & 4 \\
 & Hidden layer dimension & 100 \\
 & Activation function & GELU \\
\hline
\multirow{3}{*}{Trunk Network} & Number of hidden layers & 4 \\
 & Hidden layer dimension & 100 \\
 & Activation function & GELU \\
\hline
\multirow{2}{*}{Training Details} & Optimizer & Adam \\
 & Total N. of Parameters & 101,100 \\
\hline
\multirow{4}{*}{\shortstack{vRBA Parameters \\ (Hybrid Strategy)}} & Method & \shortstack{Weighting (Spatial Domain) \\ \& Sampling (Function Space)} \\
 & $\gamma$ (EMA memory) & 0.999 \\
 & $\eta$ (EMA learning rate) & 0.01 \\
 & $\phi$ (Smoothing) & 0.8 \\
\hline
\end{tabular}
\caption{Implementation details for the DeepONet used for the Bubble Growth Dynamics benchmark. The vRBA hyperparameters are consistent with those used in the first-order PINN experiments. Note that for experiments using the quadratic potential ($\Phi(r)=r^2$), no smoothing was applied ($\phi=1.0$).}\label{tab:deeponet_full_details}
\end{table}

\subsubsection{FNO and TC-UNet}
For the more complex Fourier Neural Operator (FNO) and Time-Conditioned U-Net (TC-UNet) models, we adopted the specific architectures and code provided in the reference TC-UNet study \cite{ovadia2025real}. This approach ensures our results are directly comparable to established benchmarks for these models. Consistent with the DeepONet experiment, we applied the same hybrid vRBA strategy: importance weighting was used for the spatial domain, while importance sampling was applied to the function space.

\subsection{JAX Implementation of the SSBroyden Optimizer}
\label{sec:appendix_jax_impl}

This appendix details our custom JAX implementation of the Self-Scaled Broyden (SSBroyden) optimizer, which was used for all second-order optimization experiments. The original method, proposed by Urbán et al.~\cite{urban2024unveiling}, relies on modified SciPy routines that are CPU-bound and not directly portable to a JAX-native, GPU-accelerated workflow.

Our implementation preserves the core SSBroyden update logic, which dynamically computes scaling ($\tau_k$) and updating ($\phi_k$) parameters. However, the line search portion of the algorithm required a complete rewrite. Due to the absence of SciPy's advanced line search routines in JAX Scipy, we developed a custom three-stage fallback line search mechanism to ensure robust convergence. This procedure creates a cascade of attempts with progressively less strict Wolfe conditions, starting with strict parameters ($c_2=0.9$) and relaxing them ($c_2=0.8$, then $c_2=0.5$) only upon failure. This adaptation was critical for ensuring the optimizer could consistently make progress on the challenging loss landscapes of the problems studied.

\end{document}